\providecommand{\tabularnewline}{\\}
\providecommand{\algorithmname}{Algorithm}
\begin{document}
\newcommand{\sidenote}[1]{\marginpar{\small \emph{\color{Medium}#1}}}

\global\long\def\model{\text{\text{\emph{cm-BO}}}}%

\title{Clustering-based Meta Bayesian Optimization with Theoretical Guarantee}
\author{Khoa Nguyen\inst{1}\textsuperscript{(\Letter)} \and Viet Huynh\inst{2} \and
Binh Tran\inst{3} \and Tri Pham\inst{3} \and Tin Huynh\inst{3} \and Thin Nguyen\inst{1}}
\authorrunning{K. Nguyen et al.}
\institute{Applied Artificial Intelligence Institute (A2I2), Deakin University, Australia 
\email{\{khoa.nguyen,thin.nguyen\}@deakin.edu.au}
\and
Edith Cowan University, Perth, Australia\\
\email{v.huynh@ecu.edu.au}
\and
The Saigon International University, Ho Chi Minh City, Vietnam
\email{\{tranlehaibinhk12,phamxuantri,huynhngoctin\}@siu.edu.vn}
}
\maketitle         
\begin{abstract}
Bayesian Optimization (BO) is a well-established method for addressing
black-box optimization problems. In many real-world scenarios, optimization
often involves multiple functions, emphasizing the importance of leveraging
data and learned functions from prior tasks to enhance efficiency
in the current task. To expedite convergence to the global optimum,
recent studies have introduced meta-learning strategies, collectively
referred to as meta-BO, to incorporate knowledge from historical tasks.
However, in practical settings, the underlying functions are often
heterogeneous, which can adversely affect optimization performance
for the current task. Additionally, when the number of historical
tasks is large, meta-BO methods face significant scalability challenges.
In this work, we propose a scalable and robust meta-BO method designed
to address key challenges in heterogeneous and large-scale meta-tasks.
Our approach (1) effectively partitions transferred meta-functions
into highly homogeneous clusters, (2) learns the geometry-based surrogate
prototype that capture the structural patterns within each cluster,
and (3) adaptively synthesizes meta-priors during the online phase
using statistical distance-based weighting policies. Experimental
results on real-world hyperparameter optimization (HPO) tasks, combined
with theoretical guarantees, demonstrate the robustness and effectiveness
of our method in overcoming these challenges.

\keywords{Bayesian optimization \and Gaussian process \and Meta
learning \and Clustering \and Optimal transport} 
\end{abstract}

\section{Introduction}

Global optimization of expensive black-box functions is a significant
challenge in scientific and industrial contexts. Bayesian Optimization
(BO) is an effective framework for this, successfully applied in hyperparameter
tuning \cite{kotthoff2017auto,snoek2012practical}, manufacturing
design \cite{huynh2023rapid,zhang2020bayesian}, and robotics \cite{driess2017constrained}.
Its popularity stems from a data-efficient sampling strategy using
stochastic surrogate models, mainly Gaussian Processes (GPs). GPs
enable efficient computation through a closed-form posterior distribution
\cite{williams2006gaussian}, providing a solid theoretical foundation
for BO guarantees. In hyperparameter tuning problems, the objective
is to identify the optimal combination of hyperparameters that maximizes
model performance on a given dataset. Often, hyperparameters have
already been tuned for multiple related datasets. By leveraging knowledge
from these previous tasks, meta-BO enables the identification of optimal
hyperparameters for a new dataset with fewer experiments compared
to using standard BO alone. To this end, several pathways exist for
exploiting insights from historical tasks to benefit new tasks such
as multitask BO \cite{swersky2013multi,dai2020multi} and meta-BO
\cite{wang2018regret,dai2022provably,wang2023hyperbo,feurer2018scalable}.
In a multi-task BO framework \cite{swersky2013multi,dai2020multi},
previous tasks are optimized concurrently with the target task using
a multi-task Gaussian Process (GP) model, which exploits similarities
between tasks. However, these approaches face significant scalability
challenges. The primary limitation arises from the need to incorporate
all data points from previous tasks and current tasks during each
optimization step. As the number of tasks increases, this results
in significantly higher computational complexity and reduced efficiency.
Recent research has explored meta-learning methodologies as a compelling
alternative \cite{wang2018regret,dai2022provably,wang2023hyperbo,feurer2018scalable}.
These approaches have introduced meta-stochastic functions that efficiently
consolidate information from prior tasks \cite{wang2018regret,feurer2018scalable,wang2023hyperbo},
providing a valuable foundation of prior knowledge for future tasks.
Furthermore, some research has focused on designing aggregated acquisition
functions tailored to enhance optimization for new tasks \cite{dai2022provably}.

A common assumption in meta-BO is that previous tasks share similarities
to the target task \cite{fan2022hyperbo+,wang2023hyperbo,wang2018regret}.
However, this assumption can be problematic, as including dissimilar
tasks may introduce noise that inhibits convergence to the optimum.
To achieve asymptotically no-regret convergence, a meta-BO approach
must selectively identify and incorporate prior knowledge from only
similar meta-tasks. To address this challenge, this paper proposes
a clustering-based meta-BO framework ($\model$), designed to select
tasks similar to the current one. 

Our method addresses key challenges in meta-learning, including computational
complexity, source-task heterogeneity, and the asymmetric setting
where target function behaviors are gradually revealed during the
online phase while offline meta-task sources remain static. To ensure
scalability, $\model$ employs a surrogate-based clustering technique
to group historical meta-tasks sharing common trends and compute a
function prototype for each cluster. For improved robustness in handling
meta-task heterogeneity, both theoretically and empirically, $\model$
combines two meta BO approaches, prior learning and ensemble model\emph{s
}\cite{bai2023transfer}, to propose an online adaptive meta-prior.
This approach introduces two key innovations: (1) a prior synthesis
procedure that adapts dynamically during the BO runtime for the target
task; and (2) an ensemble of meta-task posteriors to construct a separate
prior, with the weighting policy informed by approximated distances
between Gaussian Process (GP) posteriors. 

\textbf{Contributions.} We summarize our main contributions as follows:
(1) development of a clustering algorithm for GP posteriors from historical
meta-tasks; (2) proposal of an adaptive meta-prior synthesis strategy,
utilizing online-updated similarity-based weights during BO runtime;
(3) extensive comparison of various $\model$ variants with non-meta
and meta-BO approaches, demonstrating scalability, robustness, and
flexibility in practical BO settings; and (4) theoretical analysis
of the regret bound for clustering-based meta-BO, showing that our
meta-prior ensures computational feasibility and guarantees BO convergence,
even with high heterogeneity in historical meta-tasks.

\section{Background and Related Work }\label{sec:Related-Work}

We now review the background on Bayesian Optimization (BO) using Gaussian
Processes (GPs) and relevant statistical distances. Additionally,
we examine related studies on meta-BO, positioning our work within
this research area.

\textbf{Notation.} Denote $\mathcal{GP}(\mu,k)$ as a GP with mean
function $\mu(x):\mathbb{R}^{d}\rightarrow\mathbb{R}$ and kernel
function $k(x,x'):\mathbb{R}^{d}\times\mathbb{R}^{d'}\rightarrow\mathbb{R}$,
and $\mathcal{N}(m,\Sigma)$ as a multivariate Gaussian distribution
with mean vector $m$ and covariance matrix $\Sigma$. When $x=x'$,
the kernel function is overloaded as $k(x):=k(x,x)$. A set of $n$
observations $\{(x_{i},y_{i})\}_{i=1}^{n}$ ($x_{i}\in\mathbb{R}^{d}$,
$y_{i}\in\mathbb{R}$) is abbreviated as $\{\mathbf{X},\mathbf{y}\lvert\mathbf{X}\in\mathbb{R}^{n\times d},\mathbf{y}\in\mathbb{R}^{n}\}$,
where $\mathbf{X}:=[x_{i}^{\top}\text{]}_{i=1}^{n}$ is the set of
inputs and $\mathbf{y}:=[y_{i}]_{i=1}^{n}$ is the set of outputs.

\subsection{Background}

We first summarize standard Bayesian Optimization (BO) with Gaussian
Processes (GPs) and then introduce two function divergences---KL
divergence and Wasserstein distance---used as building blocks for
clustering the posteriors of Gaussian Processes.

\textbf{BO with GP as surrogate model. }BO can utilize a GP to update
the belief about a black-box function $f$. An important property
is that if the prior distribution follows a GP, the posterior also
follows a GP. Given $n$ observations $\mathcal{D}=\{\mathbf{X},\mathbf{y}\lvert\mathbf{X}\in\mathbb{R}^{n\times d},\mathbf{y}\in\mathbb{R}^{n}\}$,
we can estimate the posterior distribution of $f$ at an arbitrary
test point $x\in\mathbb{R}^{d}$ by computing its posterior mean and
variance function $f\mid x,\mathcal{D}\sim\mathcal{GP}(\mu_{n},k_{n})$
with:
\begin{align}
\ensuremath{\mu_{n}(x)} & =\mu_{0}(x)+k_{0}(\mathbf{X},x)^{\top}(k_{0}(\mathbf{X})+\sigma^{2}\mathbf{I})^{-1}(\mathbf{y}-\mu_{0}(\mathbf{X})),\\
k_{n}(x) & =k_{0}(x)-k_{0}(\mathbf{X},x)^{\top}(k_{0}(\mathbf{X})+\sigma^{2}\mathbf{I})^{-1}k_{0}(\mathbf{X},x),
\end{align}
where $\mu_{0}(\mathbf{X}):=[\mu_{0}(x_{i})]_{i=1}^{n}$ is a prior
mean vector, $k_{0}(\mathbf{X},x):=[k_{0}(x_{i},x)]_{i=1}^{n}$ is
a vector of co-variance between $\mathbf{X}$ and $x$, and $k_{0}(\mathbf{X}):=\left[k_{0}(\mathbf{X},x_{i})^{\top}\right]_{i=1}^{n}$
is a covariance matrix of inputs. The prior mean function $\mu_{0}$
is usually set to be zero and covariance function $k_{0}$ could be
kernel functions including Matérn, Square Exponential (SE) and Radial
basis function (RBF) kernels \cite{williams2006gaussian}. At each
BO step, the next queried point is the optimizer of an acquisition
function (AF) $\alpha$ derived from the posterior distribution, typically
Upper Confidence Bound (GP-UCB), Probability of Improvement (PI),
and Expected Improvement (EI) \cite{williams2006gaussian}. 

\textbf{Kullback-Leibler (KL) divergence.} KL divergence is a non-symmetric
measure of the information lost when the probability distribution
$Q(x)$ is used to approximate $P(x)$. In the case where $P$ and
$Q$ are two $d$-variate normal distributions with corresponding
means $m_{0}$, $m_{1}$ and (non-singular) covariance matrices $\Sigma_{0}$,
$\Sigma_{1}$, KL divergence can be analytically computed as:
\begin{equation}
D_{KL}(P\|Q)=\nicefrac{1}{2}\left(\text{tr}(\Sigma_{1}^{-1}\Sigma_{0})+\Delta m^{\top}\Sigma_{1}^{-1}\Delta m-d+\Delta\log\det\Sigma\right)
\end{equation}
where $\Delta m:=m_{1}-m_{0}$, and $\Delta\log\det\Sigma:=\log\det\Sigma_{1}-\log\det\Sigma_{0}$.
However, this divergence does not conform to the formal definition
of a metric due to the asymmetry and dissatisfying the triangle inequality.
Infinite values resulting from the unbounded nature could limit its
interpretability and pose challenges in comparing divergences across
distributions. Therefore, we adopt \emph{Jeffreys divergence} as the
symmetric version of KL distance:
\begin{equation}
D_{Jef}=D_{KL}(P\|Q)+D_{KL}(Q\|P)
\end{equation}

\textbf{Wasserstein distance and barycenter.} Derived from the \emph{optimal
transport} theory, Wasserstein $p$-distance quantifies the minimal
cost of transforming one probability distribution into another, where
the cost is defined by the $L^{p}$ distance between the distribution
masses. Let $(M,d)$ be a metric space that is a Polish space. For
$p\in[1,+\infty]$, define $\mathcal{P}_{p}(M)$ as the set of all
probability measures $\mu$ on $M$ that satisfy $\int_{M}d^{p}(x,x_{0})d\mu(x)$
is finite for some \textbf{$x_{0}\in M$}. The Wasserstein $p$-distance
between two measures $\mu,\nu\in\mathcal{P}_{p}(M)$ is given by:
\begin{equation}
W_{p}(\mu,\nu)^{p}=\inf\limits_{{\gamma\in\Gamma(\mu,\nu)}}{\displaystyle \int_{M\times M}d^{p}(x_{1},x_{2})}d\gamma(x_{1},x_{2})
\end{equation}
where $\Gamma(\mu,\nu)$ is the set of joint distributions on $M\times M$
with marginals $\mu$ and \textbf{$\nu$}. Note that \textbf{$W_{p}$}
naturally satisfies the properties of a \emph{metric} and a minimizer
from the formula of the Wasserstein $p$-distance always exists. Similar
to KL divergence, a nice closed-form formula for the \emph{$2$-Wasserstein
distance} can be achieved when two probability measures are two non-degenerate
Gaussian (i.e. normal distributions) on $\mathbb{\mathbb{R}}^{n}$.
Defining $P$, $Q$ as above, we have:
\begin{equation}
W_{2}(P,Q)^{2}=||m_{0}-m_{1}||_{2}^{2}+Tr\left(\Sigma_{0}+\Sigma_{1}-2\sqrt{\sqrt{\Sigma_{1}}\Sigma_{0}\sqrt{\Sigma_{1}}}\right)
\end{equation}

Given $n$ normal distributions $P_{1},P_{2},\dots,P_{n}$, their
Wasserstein barycenter is defined as the minimizer of $\inf_{P}\sum_{i=1}^{n}\lambda_{i}W_{p}(P_{i},P)$,
corresponding to weights $\lambda_{1},\dots,\lambda_{n}\in\mathbb{R}_{+}$.
There is a unique solution $P^{*}\sim\mathcal{N}(\bar{m},\bar{\Sigma})$
for barycentric coordinate $\{\lambda_{j}\}_{i=1}^{n}$, where $\bar{m}=\sum_{i=1}^{n}\lambda_{i}m_{i}$
and $\bar{\Sigma}$ is the unique positive definite root of $\sum_{i=1}^{n}\lambda_{i}\sqrt{\sqrt{\bar{\Sigma}}\Sigma_{i}\sqrt{\bar{\Sigma}}}=\bar{\Sigma}$,
given $P_{i}\sim\mathcal{N}(m_{i},\Sigma_{i})$. This property also
holds for a population of GPs \cite[Theorem 4 and Proposition 5]{mallasto2017learning},
when the 2-Wasserstein metric for GPs can be arbitrarily approximated
by the 2-Wasserstein metric for finite-dimensional Gaussian distributions
\cite[Theorem 8]{mallasto2017learning}, resulting in a GP barycenter.
Therefore, we can calculate the Wasserstein metric and barycenter
for GPs through their discretized Gaussian distributions.

\subsection{Related work}

\textbf{Meta-BO.} In meta-BO, observations from other meta-tasks are
leveraged to enhance the learning of representations for the prior
mean and variance functions, which can be approached in several ways.
One approach defines a joint GP kernel that combines source and target
task evaluations either by: (1) evaluating task correlations/distances
as a multi-task problem \cite{swersky2013multi,schilling2016scalable,yogatama2014efficient,tighineanu2022transfer};
or (2) accounting for noise transfer from sources to target to better
fit the data in a dual-task setting \cite{shilton2017regret}. The
complexity of these methods scales cubically with the number of data
points across all tasks. In the second approach, source tasks are
used to learn a meta-prior for the target task (prior learning). Examples
include MetaBO \cite{wang2018regret} pre-training a prior mean and
kernel function with shared input points across sources, or HyperBO
\cite{wang2023hyperbo} and HyperBO+ \cite{fan2022hyperbo+} optimizing
the marginal log-likelihood between the prior estimators and ground
truth. This group strictly assume the source and target tasks share
a common response surface, without thoroughly considering task heterogeneity
or the asymmetric setting problem\footnote{Source tasks have fixed observations, target task observations grow
during training.} between offline and online phases. The final direction uses ensemble
models, where separate posteriors or acquisition functions derived
from source tasks are combined with weighted averaging into the target
task model. This approach includes RGPE \cite{feurer2018scalable},
TAF \cite{wistuba2016two}, RM-GP-UCB \cite{dai2022provably}, RM-GP-TS
\cite{dai2022provably}, and MetaBO\_AF \cite{volpp2019meta}. The
main challenge here is to handle sparse and noisy meta-observations,
which can mislead the similarity-based weight computation.

Generally, meta-BO faces several challenges, including scalability
(especially in training-based approaches), structural heterogeneity
and sparse observations across meta-tasks, and the asymmetric setting.
However, to the best of our knowledge, there is a lack of meta-BO
methods that address all of these challenges. Combining the strengths
of all directions, our $\model$ framework bridges this gap by introducing
an online adaptive meta-prior, which is an ensemble of clustering-based
surrogate prototypes derived directly from the posteriors of source
tasks without requiring training.

\section{Clustering-based Meta Bayesian Optimization}\label{subsec:cm-BO}

By relaxing the assumption of a shared response surface among source
and target tasks in various meta-BO approaches \cite{wang2018regret,fan2022hyperbo+,wang2023hyperbo},
we facilitate \textit{meta-task heterogeneity}. This allows black-box
functions of source tasks to be drawn independently from different
GPs, varying across response surface styles. Our clustering-based
meta-BO framework consists of three main stages. We initially categorize
historical meta-tasks into homogenized groups using k-means clustering,
which is followed by the construction of generalized prototypes for
each cluster. These prototype models are then utilized for adaptive
prior learning in the target task.

\textbf{Stage 1: Meta-task clustering. }This stage corresponds to
line\textbf{~\ref{alg:start_stage_1} }to line\textbf{~\ref{alg:end_stage_1}
}of Algorithm\textbf{~\ref{alg:clus_meta_BO}}. For each meta-task
$t_{i}$, we initially train a GP prior with its meta-dataset $\mathcal{D}^{t_{i}}$,
resulting in a posterior distribution $f|x,\mathcal{D}^{t_{i}}\sim\mathcal{GP}^{t_{i}}$.
Using the GP posterior, we capture meta-task information through a
lightweight function distribution that is scalable and generalizable
over noise and sparse data, in contrast to the joint kernel design
in \cite{swersky2013multi,shilton2017regret,schilling2016scalable,yogatama2014efficient,tighineanu2022transfer}.
Subsequently, we discretize GP posteriors into Gaussian distributions
over a finite index space, each represented by a mean vector and covariance
matrix. These finite-dimensional representatives make traditional
clustering algorithms (e.g., K-means) feasible. We leverage statistical
divergences/metrics, such as Jeffrey divergence and Wasserstein metric
defined in Section\textbf{~\ref{sec:Related-Work}}, to estimate relative
distances (or dissimilarity) between two GD points. With K-means,
the center point of cluster $\mathcal{C}_{i}$ is updated as:
\begin{equation}
\bar{m}_{\mathcal{C}_{i}}:=\nicefrac{1}{|\mathcal{C}_{i}|}\sum m^{t_{j}},\text{ }\bar{\Sigma}_{\mathcal{C}_{i}}:=\nicefrac{1}{|\mathcal{C}_{i}|}\sum\Sigma^{t_{j}}
\end{equation}
where $\mathcal{GP}^{t_{j}}\in\mathcal{C}_{i}$. After clustering,
we identify $C$ \textsl{homogenized groups} of GPs.

\textbf{State 2: Cluster prototype construction. }This stage corresponds
to line\textbf{~\ref{alg:start_stage_2} }to line\textbf{~\ref{alg:end_stage_2}
}of Algorithm\textbf{~\ref{alg:clus_meta_BO}}. We generalize each
cluster of GP posteriors by an encapsulated prototype, which aims
to (1) capture the shared response surface structure of GPs within
each cluster, and (2) provide an averaged representation in cases
where there is diverse variability within a cluster. Specifically,
we design two versions of this prototype: one as a geometric center
and the other as a Wasserstein barycenter of GPs:
\begin{enumerate}
\item[(1)] \textsl{Geometric center}: The geometric center for cluster $\mathcal{C}_{i}$
can be adopted through a linear sum over GPs: $\mathcal{GP}\left(\mu^{\mathcal{C}_{i}},k^{\mathcal{C}_{i}}\right):=\sum_{\mathcal{GP}^{t_{j}}\in\mathcal{C}_{i}}\mathcal{GP}^{t_{j}}\left(\mu^{t_{j}},k^{t_{j}}\right)$,
which remains a GP with $\mu^{\mathcal{C}_{i}}=\sum_{\mathcal{GP}^{t_{j}}\in\mathcal{C}_{i}}\mu^{t_{j}}$,
and $k^{\mathcal{C}_{i}}=\sum_{\mathcal{GP}^{t_{j}}\in\mathcal{C}_{i}}k^{t_{j}}$
(Theorem~\ref{them:gd_linear}). We use its unbiased estimation $\mathcal{GP}\left(\mu^{\mathcal{C}_{i}},k^{\mathcal{C}_{i}}\right):=\mathcal{GP}\left(\nicefrac{\mu^{\mathcal{C}_{i}}}{|\mathcal{C}_{i}|},\nicefrac{k^{\mathcal{C}_{i}}}{|\mathcal{C}_{i}|}\right)$,
for empirically heterogeneous-sized clusters. 
\item[(2)] \textit{Wasserstein barycenter}: The Wasserstein barycenter concept
is derived from optimal transport, where the space of GPs is geometrically
transformed using the Wasserstein metric. For cluster $\mathcal{C}_{i}$,
we use the barycenter root with coordinates $\{\xi_{j}=\frac{1}{|\mathcal{C}_{i}|}\}_{j=1}^{|\mathcal{C}_{i}|}$.
\end{enumerate}
\textbf{Stage 3: Adaptive prior construction during the target task.}
This stage is from line\textbf{~\ref{alg:start_stage_3} }to line\textbf{~\ref{alg:end_stage_3}
}of Algorithm\textbf{~\ref{alg:clus_meta_BO}. }In the initial phase
of each query $\tau$, an online adaptive GP prior $\mathcal{GP}_{0}^{(\tau)}\left(\mu_{0}^{(\tau)},k_{0}^{(\tau)}\right)$
is synthesized from cluster prototypes:
\begin{equation}
\mathcal{GP}_{0}^{(\tau)}\left(\mu_{0}^{(\tau)},k_{0}^{(\tau)}\right):=\sum_{i=1}^{C}w_{\mathcal{C}_{i}}^{(\tau-1)}\mathcal{GP}\left(\mu^{\mathcal{C}_{i}},k^{\mathcal{C}_{i}}\right)
\end{equation}
This prior is also a GP with its mean and covariance as outlined in
Line\textbf{~\ref{alg:stage_3_1}}, Algorithm\textbf{~\ref{alg:clus_meta_BO}
}(Theorem~\ref{them:gd_linear})\textbf{.} Subsequently, a training
process using current observations on this prior yields a posterior
$\mathcal{GP}^{(\tau)}\left(\mu^{(\tau)},k^{(\tau)}\right)$. From
this posterior, we estimate the distance $d_{i}$ to each cluster
prototype $\mathcal{GP}\left(\mu^{\mathcal{C}_{i}},k^{\mathcal{C}_{i}}\right)$
using either Jeffrey divergence or Wasserstein metric. The closer
the distance $d_{i}$ is to zero, the more similar the $\tau$-th
posterior of the target task is to the majority of meta-task GPs within
cluster $\mathcal{C}_{i}$. With this intuition, we enhance the transfer
of information from clusters whose members have a high probability
of similarity in response surface structure to the target function,
and vice versa, by updating prototype weights for the next prior synthesis
(from Line\textbf{~\ref{alg:stage_3_2}} to Line\textbf{~\ref{alg:stage_3_3}},
Algorithm\textbf{~\ref{alg:clus_meta_BO}}). The weight $w_{\mathcal{C}_{i}}$
is calculated by interpolating the softmax-normalized weights that
are inversely proportional to $d_{i}$ (Line\textbf{~\ref{alg:stage_3_3}},
Algorithm\textbf{~\ref{alg:clus_meta_BO}}). By evolving the GP prior
with highly selective meta-knowledge during online queries, our framework
provides a gentle pathway to addressing meta-task heterogeneity and
the asymmetric setting. We follow standard BO experiment settings
in most literature, searching the optimizer within a finite space
$\mathcal{V}$ (i.e. a grid), making our stages 2 and 3 feasible by
using finite-dimensional GDs as approximate GPs. Constructing cluster
prototypes as geometric centers, the time complexity of $\model$
is $\mathcal{O}(Kn^{3}+T_{c}KC|\mathcal{V}|^{3}+T(n^{3}+mn^{2}d+Cm^{3}))$,
$T_{c}$, $n$, and $m$ are the number of clustering iterations,
data points, and empirical candidate points for optimizer searching.
\begin{theorem}
\label{them:gd_linear}Suppose that $\mathcal{GP}^{(1)},\mathcal{GP}^{(2)},\ldots,\mathcal{GP}^{(N)}$
are $N$ independent GPs over Euclidean space $\mathbb{R}^{d}$. Their
linear combination $\hat{\mathcal{GP}}:=\sum_{i=1}^{N}a_{i}\mathcal{GP}^{(i)}$
(where $a_{i}\in\mathbb{R}_{+}$) is also a GP over $\mathbb{R}^{d}$
\cite{adler1990introduction}. If $\mathcal{GP}^{(i)}$ has mean $\mu^{(i)}$
and covariance $k^{(i)}$ (for $i=\overline{1,N}$), then $\hat{\mathcal{GP}}$
has mean $\hat{\mu}=\sum_{i=1}^{N}a_{i}\mu^{(i)}$ and covariance
$\hat{k}=\sum a_{i}^{2}k^{(i)}$.
\end{theorem}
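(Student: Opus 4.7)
The plan is to verify the two defining properties of a Gaussian process for $\hat{\mathcal{GP}}$, and then read off $\hat\mu$ and $\hat k$ by linearity. Recall that a stochastic process $f$ on $\mathbb{R}^d$ is a GP iff for every finite collection $\mathbf{X} = (x_1,\ldots,x_n)$ the random vector $f(\mathbf{X}) := (f(x_1),\ldots,f(x_n))^\top$ is multivariate Gaussian. So the goal reduces to showing $\hat f(\mathbf{X})$ is multivariate Gaussian and then computing its mean and covariance.

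First I would fix an arbitrary finite $\mathbf{X}\subset\mathbb{R}^d$. For each $i$, by the assumption that $\mathcal{GP}^{(i)}$ is a GP with mean $\mu^{(i)}$ and covariance $k^{(i)}$, the vector $f^{(i)}(\mathbf{X})$ is Gaussian with mean $\mu^{(i)}(\mathbf{X})$ and covariance matrix $k^{(i)}(\mathbf{X})$. Because the processes $\mathcal{GP}^{(i)}$ are independent, the joint vector $(f^{(1)}(\mathbf{X}),\ldots,f^{(N)}(\mathbf{X}))$ is jointly Gaussian (product of Gaussians is Gaussian). Since $\hat f(\mathbf{X}) = \sum_{i=1}^N a_i f^{(i)}(\mathbf{X})$ is a deterministic linear map applied to this jointly Gaussian vector, it is itself multivariate Gaussian. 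As $\mathbf{X}$ was arbitrary, $\hat{\mathcal{GP}}$ satisfies the defining property of a GP.

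Next I would compute the mean and covariance. By linearity of expectation,
\begin{equation}
\hat\mu(x) = \mathbb{E}\bigl[\hat f(x)\bigr] = \sum_{i=1}^N a_i\,\mathbb{E}\bigl[f^{(i)}(x)\bigr] = \sum_{i=1}^N a_i\,\mu^{(i)}(x).
\end{equation}
For the covariance, bilinearity and independence of the $f^{(i)}$ (so that cross-covariances vanish) give
\begin{equation}
\hat k(x,x') = \mathrm{Cov}\!\left(\sum_{i} a_i f^{(i)}(x),\,\sum_{j} a_j f^{(j)}(x')\right) = \sum_{i=1}^N a_i^{2}\,k^{(i)}(x,x'),
\end{equation}
which is the stated formula.

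I do not expect a hard step: the two substantive ingredients — that independent Gaussian vectors are jointly Gaussian and that affine maps preserve Gaussianity — are classical. The only points that require a little care are (i) explicitly invoking independence to discard cross-covariance terms (otherwise the covariance would naively read $\sum_{i,j} a_i a_j k^{(i,j)}$), and (ii) checking that the construction produces a valid covariance kernel, which is automatic since a positive combination (with weights $a_i^2 \ge 0$) of positive semidefinite kernels is positive semidefinite. This last observation also justifies the downstream use of $\hat{\mathcal{GP}}$ as a surrogate prior in Stage 2 and Stage 3 of Algorithm~\ref{alg:clus_meta_BO}.
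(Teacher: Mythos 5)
Your proposal is correct and follows essentially the same route as the paper's proof: reduce to finite-dimensional marginals over an arbitrary finite index set (the paper's Theorem~\ref{them:gp_gd_them} and Corollary~\ref{cor:gp_gp}), invoke closure of Gaussianity under linear combinations of independent Gaussians (Theorem~\ref{them:gd_linear_aux}), and compute the mean by linearity and the covariance via Bienaym\'e/bilinearity with independence annihilating the cross terms, yielding $\hat{k}=\sum_i a_i^2 k^{(i)}$. If anything, you are slightly more careful than the paper at one point: you explicitly justify that $\sum_i a_i f^{(i)}(\mathbf{X})$ is Gaussian by noting joint Gaussianity of independent Gaussian vectors plus closure under affine maps, whereas the paper's Theorem~\ref{them:gd_linear_aux} only computes the mean and variance, implicitly assuming the Gaussianity of the combination.
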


\begin{proposition}
\label{prop:prior_non_degenerate}If GP posteriors are derived from
meta-tasks by training a GP prior with a positive-definite kernel
function (e.g., the Matérn kernel), and cluster prototypes are geometric
centers, then the GP meta-priors constructed during the target task
are non-degenerate.
\end{proposition}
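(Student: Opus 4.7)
The statement is that the synthesized target-task prior $\mathcal{GP}_{0}^{(\tau)}$ has a strictly positive definite covariance kernel, i.e., the Gram matrix on any finite grid $\mathcal{V}$ is invertible. The plan is to propagate strict positive-definiteness through the three stages of the construction: meta-task posterior $\to$ cluster prototype $\to$ weighted sum.

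\textbf{Step 1 (posterior non-degeneracy).} I would first argue that each meta-task posterior kernel $k^{t_{i}}$ is strictly positive definite on $\mathcal{V}$. By hypothesis the prior kernel $k_{0}$ is strictly positive definite (the Mat\'ern family being the canonical example), so $k_{0}(\mathbf{X})+\sigma^{2}\mathbf{I}$ is invertible and, for any finite $\mathbf{x}\subset\mathcal{V}$, the posterior Gram matrix is the Schur complement
\begin{equation*}
k^{t_{i}}(\mathbf{x})=k_{0}(\mathbf{x})-k_{0}(\mathbf{X},\mathbf{x})^{\top}\bigl(k_{0}(\mathbf{X})+\sigma^{2}\mathbf{I}\bigr)^{-1}k_{0}(\mathbf{X},\mathbf{x}),
\end{equation*}
which coincides with the covariance of the conditional Gaussian $f(\mathbf{x})\mid\mathbf{y}$. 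A standard argument (noise term $\sigma^{2}>0$, together with strict positive definiteness of the joint prior Gram matrix on $\mathbf{X}\cup\mathbf{x}$) yields that this Schur complement is strictly positive definite. Hence every $\mathcal{GP}^{t_{i}}$ is non-degenerate on $\mathcal{V}$.

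\textbf{Step 2 (prototype non-degeneracy).} A geometric-center prototype is, by definition, $k^{\mathcal{C}_{i}}=\tfrac{1}{|\mathcal{C}_{i}|}\sum_{\mathcal{GP}^{t_{j}}\in\mathcal{C}_{i}}k^{t_{j}}$. Strict positive-definiteness is preserved under positive scaling and under finite sums, because $v^{\top}(\sum_{j}k^{t_{j}}(\mathbf{x}))v=\sum_{j}v^{\top}k^{t_{j}}(\mathbf{x})v>0$ for every nonzero $v$ as soon as one summand is strictly positive definite. Thus each cluster prototype is non-degenerate. (I would flag this step as the one worth being careful about: it relies on having at least one non-degenerate summand per cluster, which Step~1 guarantees whenever the cluster is nonempty, a trivial property of $k$-means output.)

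\textbf{Step 3 (meta-prior non-degeneracy).} By Theorem~\ref{them:gd_linear} applied to the cluster prototypes with coefficients $a_{i}=w_{\mathcal{C}_{i}}^{(\tau-1)}$, the synthesized prior is itself a GP whose kernel is $k_{0}^{(\tau)}=\sum_{i=1}^{C}(w_{\mathcal{C}_{i}}^{(\tau-1)})^{2}k^{\mathcal{C}_{i}}$. The weights arise from softmax normalization of quantities derived from the distances $d_{i}$, hence $w_{\mathcal{C}_{i}}^{(\tau-1)}>0$ strictly for every $i$ and every $\tau$. Applying Step~2's additivity argument once more, $k_{0}^{(\tau)}$ is a strictly positive linear combination of strictly positive-definite kernels, and therefore strictly positive definite on $\mathcal{V}$. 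This is exactly the assertion that $\mathcal{GP}_{0}^{(\tau)}$ is non-degenerate.

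\textbf{Main obstacle.} The only delicate point is Step~1: proving that the conditional-Gaussian covariance is strictly (not merely weakly) positive definite. I would handle it by writing the joint prior covariance on $\mathbf{X}\cup\mathbf{x}$ as a block matrix, observing that the joint block is strictly positive definite because $k_{0}$ is strictly positive definite on any finite set of distinct points, and concluding by the Schur-complement criterion; the noise term $\sigma^{2}\mathbf{I}$ provides a clean fallback if observation points coincide with test points. The remaining steps are algebraic preservation properties of the cone of strictly positive-definite kernels and follow immediately.
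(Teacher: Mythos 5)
Your proof is correct and follows essentially the same route as the paper: non-degeneracy of each meta-task posterior via a Schur-complement argument on the joint prior Gram matrix (the paper's Theorem~\ref{thm:pos_def_post}), followed by closure of strict positive-definiteness under positive linear combinations (Lemma~\ref{lem:pos_def_mat_linear} and Corollary~\ref{cor:non_deg_gp}), applied first to the geometric-center prototypes and then to the softmax-weighted meta-prior, whose weights are strictly positive by construction. Your explicit handling of the noisy case via the $\sigma^{2}\mathbf{I}$ term is a slightly more careful rendering of the paper's noise-free sketch, but it is a refinement of the same argument rather than a different approach.
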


\begin{remark}
The non-degeneracy of our meta-priors guarantees the \emph{invertibility}
required for computing predictive posterior distributions in any index
subspace. In cases where cluster prototypes are \emph{Wasserstein
barycenter}s, it has been noted in \cite{mallasto2017learning,masarotto2019procrustes}
that the non-degeneracy of the barycenter of non-degenerate GPs remains
a conjecture. However, with discretized finite-dimensional GDs, there
exists a unique positive-definite covariance matrix that satisfies
the defined equation for the covariance matrix of the\emph{ barycenter}
with the given barycentric coordinates \cite{mallasto2017learning}.
Therefore, it is empirically feasible to search for the optimum of
the black-box function within a finite index space.

We now proceed to state Theorem~\ref{them:regret}, which provide
the regret bounds of our proposed method when employing the \textit{geometric
center} in cluster prototype construction and utilizing the GP-UCB
acquisition function. For \textit{Wasserstein barycenters} and other
types of AFs, we leave this as future work. 
\end{remark}

Similar to \cite[Lemma 2, 3, 6]{dai2022provably}, we define $\tilde{\mu}^{t_{i}}$
and $\tilde{k}^{t_{i}}$ as the mean and covariance of the posterior
$\tilde{\mathcal{GP}}^{t_{i}}$, which is derived by conditioning
the prior on $\tilde{\mathcal{D}}^{t_{i}}$, instead of conditioning
on $\mathcal{D}^{t_{i}}$ as Line\textbf{~\ref{alg:stage_1_1}}, Algorithm\textbf{~\ref{alg:clus_meta_BO}}.
$\tilde{\mathcal{D}}^{t_{i}}$ contains meta-observations obtained
by hypothetically observing the true function $f$ exactly at input
locations $\mathbf{X}^{t_{i}}$ observed in $\mathcal{D}^{t_{i}}$.
Let $n_{i}$ denote the number of observations in $\mathcal{D}^{t_{i}}$,
$\sigma$ represent the observation noise, and $D_{i}$ be defined
as the function gap $D_{i}:=\max_{j=1,\ldots,n_{i}}\lvert f(x_{i,j})-f_{i}(x_{i,j})\lvert$,
where $f$ and $f_{i}$ (for $i=\overline{1,K}$) are true functions
of the target task and $K$ meta-tasks. 
\begin{algorithm}[H]
\centering
\begin{algorithmic}[1]
\Require{$K$ source tasks $\mathcal{D}^{t_{i}}=\{\mathbf{X}^{t_{i}},\mathbf{y}^{t_{i}}\lvert\mathbf{X}^{t_{i}}\in\mathbb{R}^{n_{i}\times d},\mathbf{y}\in\mathbb{R}^{n_{i}}\}$ ($i=\overline{1,K}$), $N$ initial points of the target task $\mathcal{D}=\{(x_{i},y_{i})\}_{i=1}^{N}$, acquisition function (AF) $\alpha$.}
\Ensure{Estimated maximizer of the true target function $f$ after $T$ BO queries.}
\State \textbf{Stage 1.} Meta-task clustering \label{alg:start_stage_1}
\State $GP\_posteriors \gets \emptyset$
\For{$i=1,\ldots,K$}
	\State $\mathcal{GP}^{t_i}(\mu^{t_i}, k^{t_i}) \gets fit\_GP(\mathcal{GP}_{prior}, \mathcal{D}^{t_{i}})$ \Comment{train GP prior with $\mathcal{D}^{t_{i}}$} \label{alg:stage_1_1}
	\State $GP\_posteriors \gets GP\_posteriors \cup \{\mathcal{GP}^{t_i}(\mu^{t_i}, k^{t_i})\}$
\EndFor
\State $\{\mathcal{C}_{i}\}_{i=1}^{C} \gets CLUSTERING(GP\_posteriors)$  \label{alg:end_stage_1}
\State \textbf{Stage 2.} Cluster prototype construction \label{alg:start_stage_2}
\State $\mathcal{GP}(\mu^{\mathcal{C}_i}, k^{\mathcal{C}_i}) \gets get\_cluster\_prototype(\mathcal{C}_{i})$, \textbf{for} $i=1,\ldots,C$  
\label{alg:end_stage_2}
\State \textbf{Stage 3.} Adaptive prior construction during the target task \label{alg:start_stage_3}
\State $w_{\mathcal{C}_{i}}^{(0)} \gets \nicefrac{1}{C}$ \Comment{initialize prototype weights}
\For{$\tau=1,\ldots,T$}
	\State $\mu_{0}^{(\tau)}\gets\sum_{i=1}^{C}w_{\mathcal{C}_{i}}^{(\tau-1)}\mu^{\mathcal{C}_{i}}, k_{0}^{(\tau)}\gets\sum_{i=1}^{C}\left(w_{\mathcal{C}_{i}}^{(\tau-1)}\right)^{2}k^{\mathcal{C}_{i}}$ \Comment{construct GP prior} \label{alg:stage_3_1}
	\State $\mathcal{GP}^{(\tau)}(\mu^{(\tau)}, k^{(\tau)}) \gets fit\_GP(\mathcal{GP}_{0}^{(\tau)}(\mu_{0}^{(\tau)},k_{0}^{(\tau)}), \mathcal{D})$ \Comment{train adaptive GP prior}
	\State $x_{\tau} \gets \text{argmax}  \alpha(\mu^{(\tau)}, k^{(\tau)})$ \Comment{get estimated maximizer from AF}
	\State $\mathcal{D} \gets \mathcal{D} \cup \{(x_{\tau},f(x_{\tau}))\}$
	\State $d_i \gets calculate\_dist\left(\mathcal{GP}(\mu^{\mathcal{C}_i}, k^{\mathcal{C}_i}),\mathcal{GP}^{(\tau)}(\mu^{(\tau)}, k^{(\tau)})\right)$, \textbf{for} $i=1,\ldots,C$\label{alg:stage_3_2}
	\State $d_{max} \gets \max\limits_{i=1,\ldots,C}d_{i}, s \gets \sum_{i=1}^{C}e^{1-\nicefrac{d_{i}}{d_{max}}}$
	\State $w_{\mathcal{C}_{i}}^{(\tau)}\gets\nicefrac{e^{1-\nicefrac{d_{i}}{d_{max}}}}{s}$, \textbf{for} $i=1,\ldots,C$ \Comment{update prototype weights} \label{alg:stage_3_3}
\EndFor \label{alg:end_stage_3}
\State \Return $\text{max} \{x_i \lvert (x_i,\cdot)\in\mathcal{D}\}$

\end{algorithmic}

\caption{Clustering-based Meta Bayesian Optimization}\label{alg:clus_meta_BO}
\end{algorithm}

\begin{theorem}
\label{them:regret}\textbf{\textup{(Regret bound)}} Define $r_{\tau}=\max f(x)-f(x_{\tau})$
as the\emph{ instantaneous regret}, where \textup{$x_{\tau}$} is
the observation queried at iteration $\tau$. Let $x^{*}$ denote
the global maximizer of black-box function $f$. At query $\tau$,
we define the GP-UCB acquisition function as $a^{(\tau)}(x)=\mu^{(\tau)}(x)+\xi\sqrt{k^{(\tau)}(x)}$
($\xi>0$), which is constructed from mean $\mu^{(\tau)}$ and covariance
\emph{$k^{(\tau)}$ }of the GP posterior $\mathcal{GP}^{(\tau)}$.
Let $\mathbf{X_{\tau}}$ denote available observations used to train
prior. Let $\gamma_{\tau}$ represent the maximum information gain
about $f$ after querying $\tau$ observations, and define $\beta_{\tau}=B+\sigma\sqrt{2(\gamma_{\tau-1}+1+log(\nicefrac{4}{\delta}))}$
\cite[Theorem 1]{dai2022provably}\emph{.} $r_{\tau}$ is upper-bounded
by:
\begin{equation}
r_{\tau}\le\delta_{\tau}\left(\iota_{\tau}\alpha_{\tau}+\eta^{\tau}(x^{*})-\eta^{\tau}(x_{\tau})+\omega_{\tau}\left\Vert \eta^{\tau}(\mathbf{X_{\tau}})\right\Vert _{2}\right)+\nicefrac{\xi}{\beta_{\tau}}(\xi+\beta_{\tau})k_{\tau}^{\nicefrac{1}{2}}(x_{\tau})
\end{equation}
where $\eta^{\tau}(x):=\sum_{i=1}^{K}\zeta_{i}^{\tau}\left(\tilde{\mu}^{t_{i}}(x)-f(x)\right)$,
$\omega_{\tau}:=2\sum_{i=1}^{K}\zeta'{}_{i}^{\tau}\nicefrac{\sqrt{n_{i}}}{\sigma^{2}}$,
$\iota_{\tau}:=2+\omega_{\tau}\sqrt{\tau-1}$, $\delta_{\tau}:=\nicefrac{\xi}{\beta_{\tau}}\left(1-\nicefrac{\beta_{\tau}}{\xi}\right)$,
and $\alpha_{\tau}:=\sum_{i=1}^{K}\zeta_{i}^{\tau}\nicefrac{n_{i}}{\sigma^{2}}\left(D_{i}+2\sqrt{2\sigma^{2}log\nicefrac{8n_{i}}{\delta}}\right)$
with $\zeta_{i}^{\tau}:=\nicefrac{w_{\mathcal{C}_{i}}^{(\tau-1)}}{|\mathcal{C}_{i}|}$
and $\zeta'{}_{i}^{\tau}:=\nicefrac{\left(w_{\mathcal{C}_{i}}^{(\tau-1)}\right)^{2}}{|\mathcal{C}_{i}|}$\emph{. }
\end{theorem}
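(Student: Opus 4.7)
My plan is to follow the regret-bound template of Dai et al.~\cite{dai2022provably} for RM-GP-UCB, replacing their meta-task ensemble weights by our clustering weights $\zeta_i^\tau$ and $\zeta_i'^\tau$, and propagating the squared-weight covariance that Theorem~\ref{them:gd_linear} forces on our synthesized prior. First I would expand the synthesized prior using Stage 2: since $\mu^{\mathcal{C}_i}=\tfrac{1}{|\mathcal{C}_i|}\sum_{t_j\in\mathcal{C}_i}\mu^{t_j}$, one gets $\mu_0^{(\tau)}(x)=\sum_{i=1}^K\zeta_i^\tau \mu^{t_i}(x)$ and $k_0^{(\tau)}(x)=\sum_{i=1}^K\zeta_i'^\tau\,k^{t_i}(x)$, with $\sum_i\zeta_i^\tau=1$ by the softmax normalization. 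Then the GP update gives $\mu^{(\tau)}(x)=\mu_0^{(\tau)}(x)+k_0^{(\tau)}(\mathbf{X_\tau},x)^\top(K_0^{(\tau)}+\sigma^2 I)^{-1}(\mathbf{y}-\mu_0^{(\tau)}(\mathbf{X_\tau}))$, which is the object we need to control.

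The second step is the standard UCB reduction. Maximality of $x_\tau$ under $a^{(\tau)}$ gives
\begin{equation}
r_\tau\le \bigl[f(x^*)-\mu^{(\tau)}(x^*)\bigr]+\bigl[\mu^{(\tau)}(x_\tau)-f(x_\tau)\bigr]+\xi\sqrt{k^{(\tau)}(x_\tau)}.
\end{equation}
I would then control each bracketed deviation by inserting the synthesized prior and splitting
\begin{equation}
f(x)-\mu_0^{(\tau)}(x)=\sum_{i=1}^K\zeta_i^\tau\bigl(f(x)-\tilde\mu^{t_i}(x)\bigr)+\sum_{i=1}^K\zeta_i^\tau\bigl(\tilde\mu^{t_i}(x)-\mu^{t_i}(x)\bigr),
\end{equation}
which produces $-\eta^\tau(x)$ plus a noise-transfer term. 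Exactly as in \cite[Lemma 2, 3]{dai2022provably}, the noise-transfer term is bounded via a sub-Gaussian concentration on the $n_i$ noisy meta-observations, combined with the definition of $D_i$; after collecting the per-task weights $\zeta_i^\tau n_i/\sigma^2$ this yields the quantity $\alpha_\tau$. The propagation of this prior error through the target GP update contributes the factor $\iota_\tau$ and the $\omega_\tau\lVert\eta^\tau(\mathbf{X_\tau})\rVert_2$ term, because $k_0^{(\tau)}(\mathbf{X_\tau},\cdot)^\top(K_0^{(\tau)}+\sigma^2 I)^{-1}$ has operator norm bounded by $\sum_i\zeta_i'^\tau\sqrt{n_i}/\sigma^2$ (using $\lVert(K_0^{(\tau)}+\sigma^2 I)^{-1}\rVert\le 1/\sigma^2$ and the decomposition of $k_0^{(\tau)}$ by Theorem~\ref{them:gd_linear}).

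The third step is to re-introduce the standard GP confidence term $\beta_\tau\sqrt{k^{(\tau)}(x)}$ from Chowdhury--Gopalan applied to the target posterior, and combine it with the prior-error decomposition. The ratio manipulations $\xi/\beta_\tau$ that create $\delta_\tau$ come from simultaneously using the UCB inequality (weighted by $\xi$) and the confidence inequality (weighted by $\beta_\tau$) and then matching coefficients so that $k^{(\tau)}(x^*)$ can be discarded, leaving only $k^{(\tau)}(x_\tau)$ with the stated prefactor $\tfrac{\xi}{\beta_\tau}(\xi+\beta_\tau)$. After this bookkeeping the assembled inequality is exactly the claimed bound.

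\paragraph{Anticipated difficulty.}
The main obstacle I expect is the covariance step: unlike \cite{dai2022provably}, our covariance carries \emph{squared} weights $\zeta_i'^\tau$ (from Theorem~\ref{them:gd_linear}) while our mean carries linear weights $\zeta_i^\tau$, so the two families of constants appear asymmetrically in $\omega_\tau$, $\iota_\tau$, and $\alpha_\tau$. Ensuring consistency—especially that the inversion $(K_0^{(\tau)}+\sigma^2 I)^{-1}$ can be bounded uniformly over clusters and that the non-degeneracy guaranteed by Proposition~\ref{prop:prior_non_degenerate} is actually used to justify the operator-norm step—will require careful handling. The rest of the argument is essentially algebraic rearrangement of the decomposition into the stated closed form.
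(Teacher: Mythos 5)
Your plan follows the same route as the paper's proof: both instantiate the RM-GP-UCB template of \cite{dai2022provably}, expand the synthesized prior into per-task weights so that $\mu_{0}^{(\tau)}=\sum_{i}\zeta_{i}^{\tau}\mu^{t_{i}}$ and $k_{0}^{(\tau)}=\sum_{i}\zeta'{}_{i}^{\tau}k^{t_{i}}$ with $\sum_{i}\zeta_{i}^{\tau}=1$, introduce the hypothetical posteriors $\tilde{\mu}^{t_{i}}$ to generate $\eta^{\tau}$ and the concentration term $\alpha_{\tau}$ (via the analogues of Lemmas 2, 3, 5, 6 of \cite{dai2022provably}, exactly the ingredients you list), and control the data-fit correction $\mathbf{k}_{0}^{(\tau)}(\cdot)^{T}(\mathbf{\Sigma}_{0}^{(\tau)}+\sigma^{2}\mathbf{I})^{-1}(\mathbf{y_{\tau}}-\mu_{0}^{(\tau)}(\mathbf{X_{\tau}}))$ with $\|(\mathbf{\Sigma}_{0}^{(\tau)}+\sigma^{2}\mathbf{I})^{-1}\|_{2}\le\nicefrac{1}{\sigma^{2}}$ and $k\le1$ to produce $\omega_{\tau}$ and, together with $\|\mu_{0}^{(\tau)}(\mathbf{X_{\tau}})-\mathbf{y_{\tau}}\|_{2}\le\|\eta^{\tau}(\mathbf{X_{\tau}})\|_{2}+\sqrt{\tau-1}\,\alpha_{\tau}$, the constant $\iota_{\tau}$. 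Your anticipated linear-versus-squared weight asymmetry is real and is handled in the paper exactly as you propose. (One small correction: the operator-norm step needs only positive semi-definiteness of $\mathbf{\Sigma}_{0}^{(\tau)}$, not the non-degeneracy of Proposition~\ref{prop:prior_non_degenerate}.)

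However, there is a genuine gap in your final assembly, and it sits exactly where the constants $\delta_{\tau}$ and $\nicefrac{\xi}{\beta_{\tau}}(\xi+\beta_{\tau})$ come from. The paper does \emph{not} bound the two deviations $f(x^{*})-\mu^{(\tau)}(x^{*})$ and $\mu^{(\tau)}(x_{\tau})-f(x_{\tau})$ separately, as your step-2 display does. It first derives
\begin{equation*}
r_{\tau}\le\Bigl(1-\tfrac{\beta_{\tau}}{\xi}\Bigr)\bigl(\mu^{(\tau)}(x^{*})-\mu^{(\tau)}(x_{\tau})\bigr)+(\xi+\beta_{\tau})\sqrt{k^{(\tau)}(x_{\tau})}
\end{equation*}
by substituting $\beta_{\tau}\sqrt{k^{(\tau)}(x^{*})}=\tfrac{\beta_{\tau}}{\xi}\bigl(a^{(\tau)}(x^{*})-\mu^{(\tau)}(x^{*})\bigr)$ and invoking $a^{(\tau)}(x^{*})\le a^{(\tau)}(x_{\tau})$; then, when the \emph{joint} prior-mean difference $\mu_{0}^{(\tau)}(x^{*})-\mu_{0}^{(\tau)}(x_{\tau})$ is telescoped through $\pm\tilde{\mu}_{0}^{(\tau)}$ and $\pm f$, the regret itself re-appears on the right-hand side, $\mu_{0}^{(\tau)}(x^{*})-\mu_{0}^{(\tau)}(x_{\tau})\le2\alpha_{\tau}+\eta^{\tau}(x^{*})-\eta^{\tau}(x_{\tau})+r_{\tau}$, because $f(x^{*})-f(x_{\tau})=r_{\tau}$ is inserted. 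Moving $(1-\nicefrac{\beta_{\tau}}{\xi})r_{\tau}$ to the left and dividing by $\nicefrac{\beta_{\tau}}{\xi}$ is the step that creates the global $\nicefrac{\xi}{\beta_{\tau}}$ factor, i.e., $\delta_{\tau}=\nicefrac{\xi}{\beta_{\tau}}\bigl(1-\nicefrac{\beta_{\tau}}{\xi}\bigr)$ and the prefactor $\nicefrac{\xi}{\beta_{\tau}}(\xi+\beta_{\tau})$ on $\sqrt{k^{(\tau)}(x_{\tau})}$. Your ``matching coefficients'' narrative, applied to your separate-deviation inequality, would instead deliver a coefficient of $1$, a final term $\xi\sqrt{k^{(\tau)}(x_{\tau})}$, and $\eta$-terms with flipped signs ($-\eta^{\tau}(x^{*})+\eta^{\tau}(x_{\tau})$), which is not the claimed statement. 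The missing idea is thus the self-referential inequality in $r_{\tau}$ obtained from the joint decomposition and solved for $r_{\tau}$; once that is added, the rest of your plan reproduces the paper's proof.
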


\begin{remark}
\label{rem:regret}The term $\nicefrac{\xi}{\beta_{\tau}}(\xi+\beta_{\tau})k_{\tau}^{\nicefrac{1}{2}}(x_{\tau})$
is proportional to the upper bound on the instantaneous regret for
standard GP-UCB \cite{srinivas2009gaussian}. Therefore, meta-tasks
affect the upper bound on the regret of our proposed method in Theorem\textbf{~\ref{them:regret}}
through the remaining term. Analytically, $\alpha_{\tau}$ contains
the linear sum of $n_{i}D_{i}$, where $D_{i}$ is the function gap
between the target and meta-task, and the linear sum of $n_{i}\sqrt{\log n_{i}}$,
weighted by $\zeta_{i}^{\tau}:=\nicefrac{w_{\mathcal{C}_{i}}^{(\tau-1)}}{|\mathcal{C}_{i}|}$.
Since \emph{$\sum_{i=1}^{K}\zeta_{i}^{\tau}=1$}, this means $\alpha_{\tau}$
can become progressively tighter during sequential BO queries; that
is, dissimilar meta-task clusters (large $D_{i}$) approach smaller
weights with less knowledge transferred. Similarly, $\eta^{\tau}(x)$
is the linear sum of $\left(\tilde{\mu}^{t_{i}}(x)-f(x)\right)$,
weighted by $\zeta_{i}^{\tau}$. When $n_{i}\rightarrow\infty$, $\tilde{\mu}^{t_{i}}(x)$
asymptotically approaches $f(x)$ and $\eta^{\tau}(x)\rightarrow0$.
For small $n_{i}$, i.e., sparse meta-task observations, the corresponding
weight $\zeta_{i}^{\tau}$ of meta-task $t_{i}$ decreases, reflecting
its diminishing impact on the bound due to dissimilarity. This also
illustrates the efficacy of our method in addressing data uncertainty.
\end{remark}

\section{Experiments}\label{sec:Experiments}

\subsection{Experimental Settings}

\textbf{Datasets.}\emph{ HPO-B} \cite{arango2021hpo}, a large-scale
benchmark for HPO based on OpenML, includes various search spaces
for machine learning (ML) tasks such as SVM, decision trees, random
forests, and XGBoost. We select 4 search spaces from its\emph{ HPO-B-v2}
subset\footnote{https://github.com/releaunifreiburg/HPO-B}, each
containing at least 10 meta-tasks and 100 evaluations per task, spanning
4 different ML models for our experiments. The data dimensions of
these search spaces range from 3 to 10 (Table \ref{tab:dataset_stats}).
The work in \cite{wang2023hyperbo} discussed the \emph{negative transfer
effect} in this benchmark, attributed to noisy meta-tasks with heterogeneous
response surfaces among meta-tasks. Through evaluation, we demonstrate
the robustness, generalizability, and scalability of $\model$ in
mitigating these challenges.

\begin{table}[H]
\centering
\vspace{-0.15cm}

\begin{minipage}[c]{0.45\columnwidth}%
\caption{ Four selected search spaces from \emph{HPO-B-v2} \cite{arango2021hpo}.
\#HPs: Number of hyperparameters (search space dimensionality); \#Tasks:
Total number of tasks.}\label{tab:dataset_stats}
\end{minipage}%
\begin{minipage}[c]{0.05\columnwidth}%
\textcolor{white}{blank space}%
\end{minipage}%
\begin{minipage}[c]{0.45\columnwidth}%
\begin{tabular}{ccc}
\hline 
\textbf{ML model} & \textbf{\#HPs} & \textbf{\#Tasks}\tabularnewline
\hline 
rpart.preproc & $3$ & $44$\tabularnewline
rpart & $6$ & $69$\tabularnewline
svm & $8$ & $63$\tabularnewline
ranger & $10$ & $74$\tabularnewline
\hline 
\end{tabular}%
\end{minipage}

\vspace{-0.15cm}
\end{table}

\textbf{Baseline methods. }\emph{First}, we compare two groups of
BO algorithms: \emph{non-meta}-\emph{BO} and \emph{meta-BO}. The former
includes (1) random search and (2) standard GP, which trains only
on observations from the target task, while the latter encompasses
three state-of-the-art approaches: (1) RGPE \cite{feurer2018scalable}
with ensemble surrogate models and mis-ranked pair-based weighting,
(2) RM-GP-UCB \cite{dai2022provably} with meta-task transfer through
GP-UCB acquisition function (AF), and (3) HyperBO \cite{wang2023hyperbo}
with prior-learning via a pre-trained surrogate prior.\emph{ Second},
we evaluate five different variants of our $\model$ framework: (1)
JefClus\_JefCMP uses Jeffreys divergence for both clustering and weight
computation, (2) WssClus\_WssCMP uses Wasserstein metric for both
tasks, (3) JefClus\_WssCMP uses Jeffreys for clustering and Wasserstein
for weights, (4) WssClus\_JefCMP is the reverse of (3)---all four
use \emph{geometric center} as the cluster prototype---and (5) WssClus\_WssCMP\_Bary
is similar to (2) but uses \textit{Wasserstein} \emph{barycenter}
instead of \emph{geometric center}.\emph{ Third}, we evaluate $\model$
variants without the clustering stage: (1) GlobalCen uses a global
geometric center of all meta-task posteriors as a fixed meta-prior
for the target task, and (2) IndiWeight-Jef, which aggregates the
geometric center with weighted meta-task posteriors based on online
similarity (we avoid the Wasserstein metric version due to its high
time cost). Our aim in comparing various versions of $\model$ is
to demonstrate its ability to balance computational efficiency in
querying observations with the overall convergence rate of meta BO
tasks. Finally, we evaluate each method using three types of AF: GP-UCB,
PI, and EI, except for RM \cite{dai2022provably}, which only implements
GP-UCB.

\textbf{Evaluation protocols and parameter settings.} We evaluate
all algorithms in a $T$-iteration BO setting, where for each query,
we extract the corresponding best-ever observations to calculate the
\emph{normalized simple regret \cite{fan2022hyperbo+}} (NSR). We
randomly split each search space into train and test sets at an $85:15$
ratio, repeating the split $5$ times to diversify the information
sources for meta-learning and test robustness of methods. The training
set contains source tasks for meta-learning. Meanwhile, each task
in the test set is used as the target task, initialized with $5$
random observations per BO run, repeated $8$ times for each pair
of training set and target task. This leads to approximately $900$
runs per method, with each BO run consisting of $50$ queries. Each
meta-task in train sets contains $50$ random historical observations
uniformly selected. The hyperparameters for HyperBO \cite{wang2023hyperbo}
and RM-GP-UCB \cite{dai2022provably} are set according to the published
works. We implement RGPE using \emph{botorch}\footnote{https://botorch.org/}
and tune its hyperparameters to achieve best validation results. For
our method, we tune the number of\emph{ }\textit{\emph{clusters $C$}}
between $2$ and $6$ to achieve the best performance across search
space splits. We evaluate clustering quality to determine the optimal
number of clusters using two metrics: 
\begin{enumerate}
\item[(1)] \textit{Intra-cluster entropy:}
\begin{equation}
intraCE=\nicefrac{\left(\sum_{i=1}^{C}\left(\nicefrac{2\sum_{t_{j},t_{k}\in\mathcal{C}_{i}}wssdist(\text{\ensuremath{\mathcal{GP}}}^{t_{j}},\text{\ensuremath{\mathcal{GP}}}^{t_{k}})}{|\mathcal{C}_{i}|(|\mathcal{C}_{i}|-1)}\right)\right)}{C}
\end{equation}
\item[(2)] \textit{Inter-cluster separation:}
\begin{equation}
interCS=\nicefrac{\sum_{\substack{t_{k}\in\mathcal{C}_{i},t_{l}\in\mathcal{C}_{j},i\neq j\\
\\}
}wssdist(\text{\ensuremath{\mathcal{GP}}}^{t_{k}},\text{\ensuremath{\mathcal{GP}}}^{t_{l}})}{\sum_{i\neq j}|\mathcal{C}_{i}||\mathcal{C}_{j}|}
\end{equation}
\end{enumerate}
where $wssdist$ is the Wasserstein distance. The larger $interCS$
and the smaller $intraCE$, the better the clustering. For GP discretization,
the number of observed locations for the \textit{\emph{finite index
space}} is set to $100$. We use a Matérn kernel with smoothness $\nu=\frac{3}{2}$
for adaptive prior construction, while an element-wise product kernel
composed of Matérn kernels with smoothness $\nu=\frac{3}{2}$ and
$\nu=\frac{1}{2}$ is used to learn posteriors for clustering. This
balance enables effective modeling of function structures with limited
observation data while maintaining robustness against noise. Regarding
AFs, following settings in \cite{wang2023hyperbo}, explore-exploit
parameter $\beta$ is set to $3$ for GP-UCB, and the max-value version
of PI is utilized.

\subsection{Experimental Discussion}

Fig. \ref{fig:regret-4796} presents experimental results for\emph{
rpart.preproc} search space, showing the \emph{mean normalized simple
regret (MNSR)} for $3$ types of AF across methods, excluding RM-GP-UCB.
We conduct two additional experiments: (1) report the average rank
of each method (aggregated across all BO runs), (2) compare performance
profiles by calculating the fraction of BO runs achieving a regret
lower than the threshold $C=0.005$. This is shown in Fig. \ref{fig:performance-4796}(a)
for \emph{rpart.preproc} search space and in Fig. \ref{fig:performance-4796}(b)
for aggregated results of remaining three search spaces.
\begin{figure}[H]
\centering
\begin{minipage}[c]{0.2\columnwidth}%
\caption{MNSR across all compared methods, on 3 types of AF from left to right:
GP-UCB, EI, PI (experimental results\emph{ on rpart.preproc }search
space)}\label{fig:regret-4796}
\end{minipage}%
\begin{minipage}[t]{0.02\columnwidth}%
\textcolor{white}{blank}%
\end{minipage}%
\begin{minipage}[c]{0.78\columnwidth}%
\includegraphics[width=1\linewidth,totalheight=4.8cm]{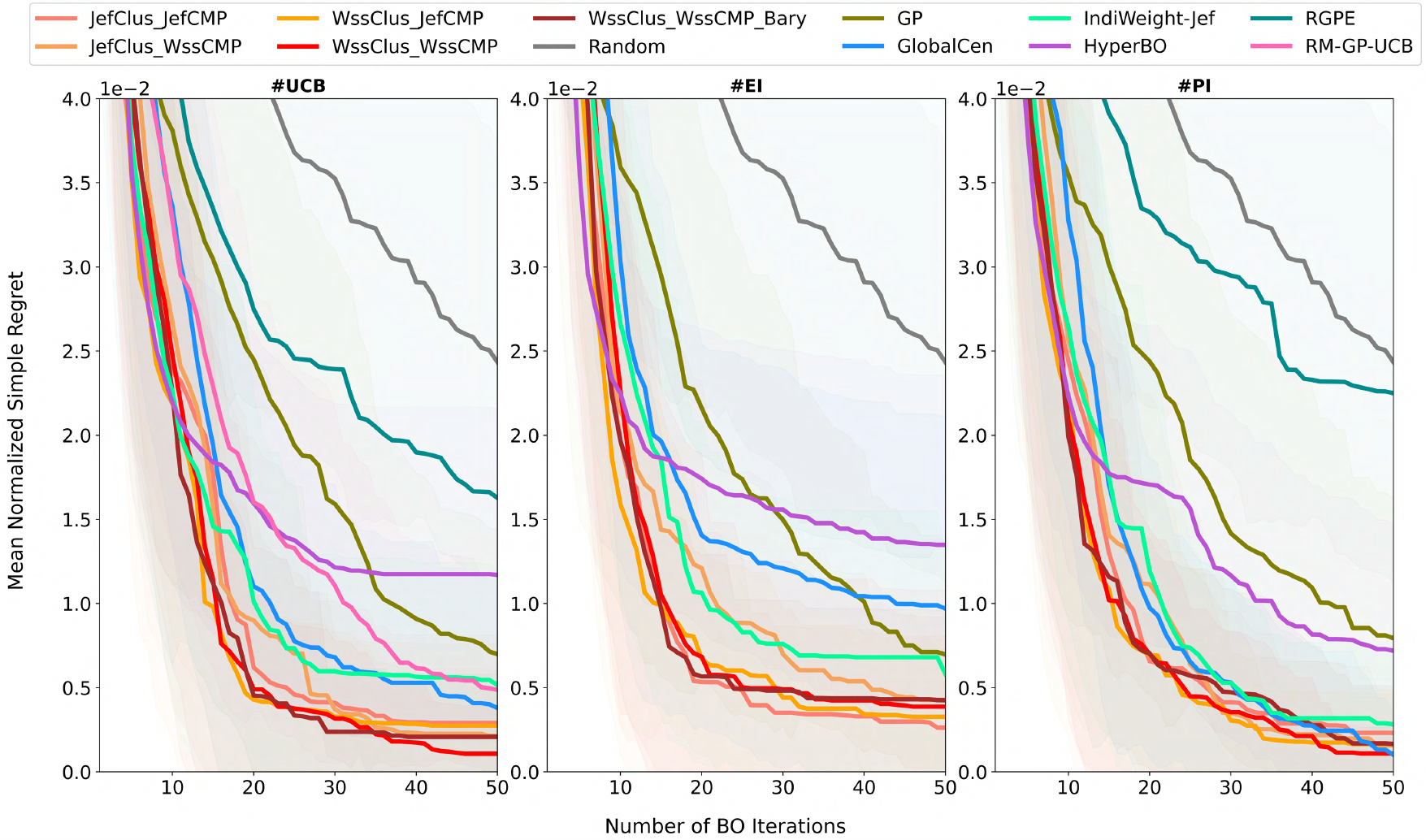}%
\end{minipage}
\end{figure}

The $\model$ variants outperform other compared methods across most
experimental results. Random Search and standard GP are less effective
as they do not leverage historical meta-task information. While RGPE
integrates meta-knowledge, it converges slowly on average, possibly
due to its ranking-based weighting policy being sensitive to noise
and heterogeneous meta-tasks. RM-GP-UCB is only feasible on the GP-UCB
AF, though it is generally the best in this setting among other methods.
HyperBO achieves fast convergence initially, but plateaus from the
$20$-th iteration and has fewer solvable BO runs at a very small
threshold of $C=0.005$, likely due to overfitting of the prior training.
In contrast, $\model$ ensures stable convergence across the entire
BO run, effectively incorporating prior knowledge to reduce early
exploration while maintaining a stable degree of exploration later.
Additionally, when the current task accumulates sufficient information,
the later stages of the BO runs can also witness a high-focused exploitation,
which accelerates convergence.

Among $\model$ variants, WssClus\_WssCMP and WssClus\_WssCMP\_Bary
exhibited the best convergence performance. This is reasonable because
the Wasserstein metric is more computationally stable, though it has
a higher time complexity as a trade-off. WssClus\_WssCMP outperformed
all other methods in terms of the number of solvable BO tasks and
average ranking, especially in the later stages of the runs. Two non-cluster-integrated
baselines, GlobalCen (non-weighted) and IndiWeight-Jef (task-specific
weighted), were less effective than the cluster-integrated $\model$,
demonstrating the efficacy of clustering in (1) prioritizing potentially
similar meta-tasks and (2) offering a suitable degree of exploration
in early BO iterations by smoothing intra-cluster meta-tasks. 

Notably, when evaluating on high-dimensional meta-datasets in HPO-P
(\emph{rpart}, \emph{svm}, and \emph{ranger} with search space dimensions
of 6, 8, and 10 respectively), the results visualized in Fig. \ref{fig:performance-4796}(b)
show that WssClus\_WssCMP\_Bary surpasses WssClus\_WssCMP, becoming
the best performer in general, although both methods maintain top
performance. In contrast, the Jeffreys-based $\model$ variants appear
to be less effective on these three high-dimensional meta-datasets,
possibly due to computational instability.

The training time for a HyperBO prior is substantial, approximately
$200$-$215$ \emph{min}, excluding later BO runtime (on the \textit{rpart.preproc}
search space). Our proposed $\model$\emph{ }\textit{\emph{framework}}
has much more reasonable timing---K-means clustering on $100$-dimensional
posteriors takes only $\approx2-3$ \emph{sec} with Jeffreys and $\approx7$
\emph{sec} with Wasserstein per clustering iteration, with typically
less than $10$ iterations. For online adaptive prior construction,
the average time to estimate the statistical distance on a $300$-location
grid (corresponding to $300$-dim GDs) is $0.02$-$0.05$ \emph{sec}
with Jeffreys and $0.4$-$0.5$ \emph{sec} with Wasserstein, scaling
slightly with the small number of clusters. Although $\model$ calculates
weights and updates the prior at each query, Fig. \ref{fig:regret-4796}
and \ref{fig:performance-4796} show its variants surpassing HyperBO
at relatively early BO iterations when sufficient information about
the target function is gained, highlighting the superior efficiency
of $\model$.
\begin{figure}[H]
\caption{Average ranking (left) and fraction of solvable BO runs at $C=0.005$
(right)}\label{fig:performance-4796}
\vspace{-0.15cm}%
\begin{minipage}[t][1\totalheight][c]{0.5\columnwidth}%
{\scriptsize (a) rpart.preproc search space}{\scriptsize\par}

\includegraphics[width=1\linewidth]{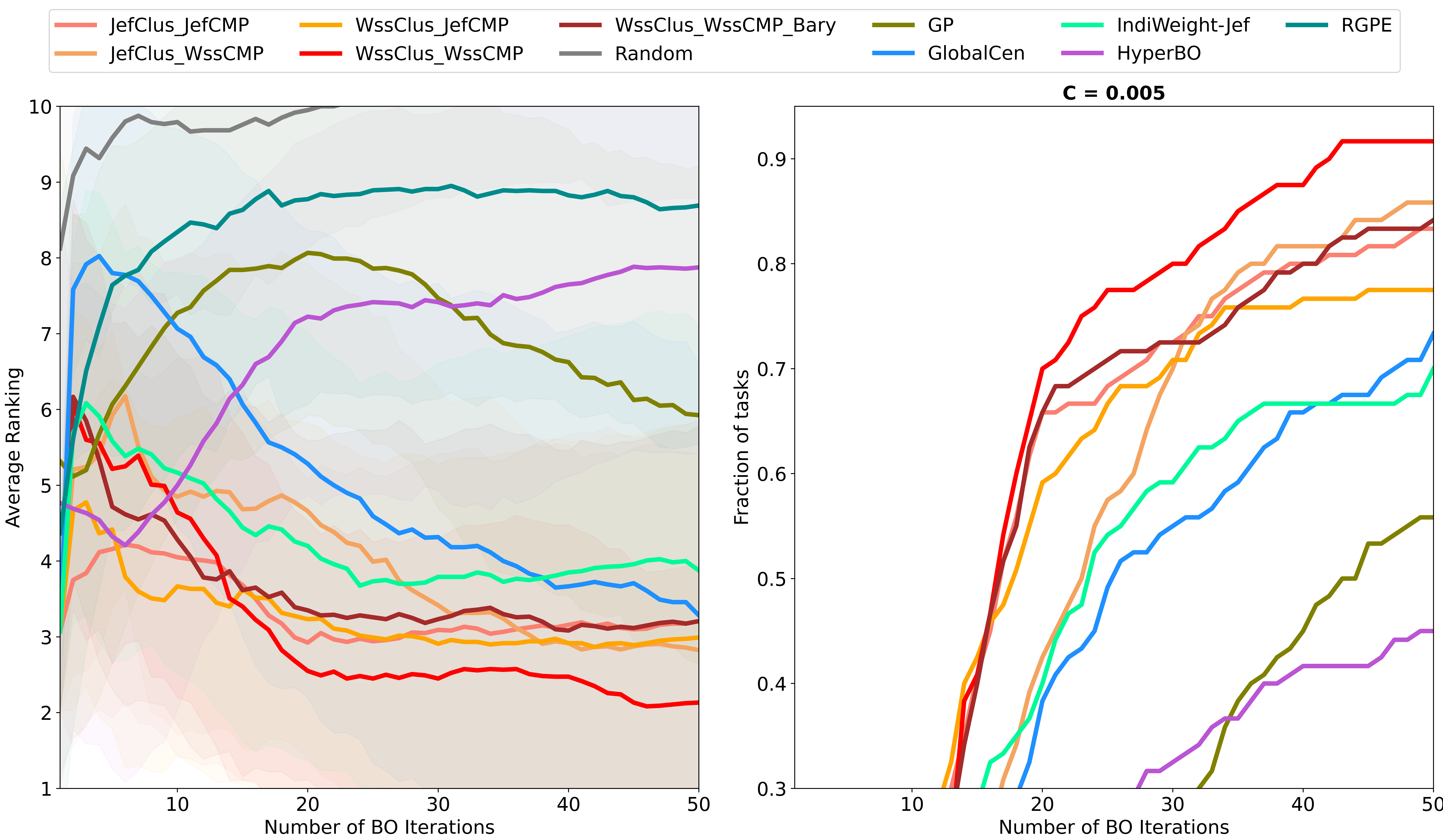}%
\end{minipage}%
\begin{minipage}[t][1\totalheight][c]{0.5\columnwidth}%
{\scriptsize (b) rpart, svm, ranger search spaces (merged)}{\scriptsize\par}

\includegraphics[width=1\linewidth]{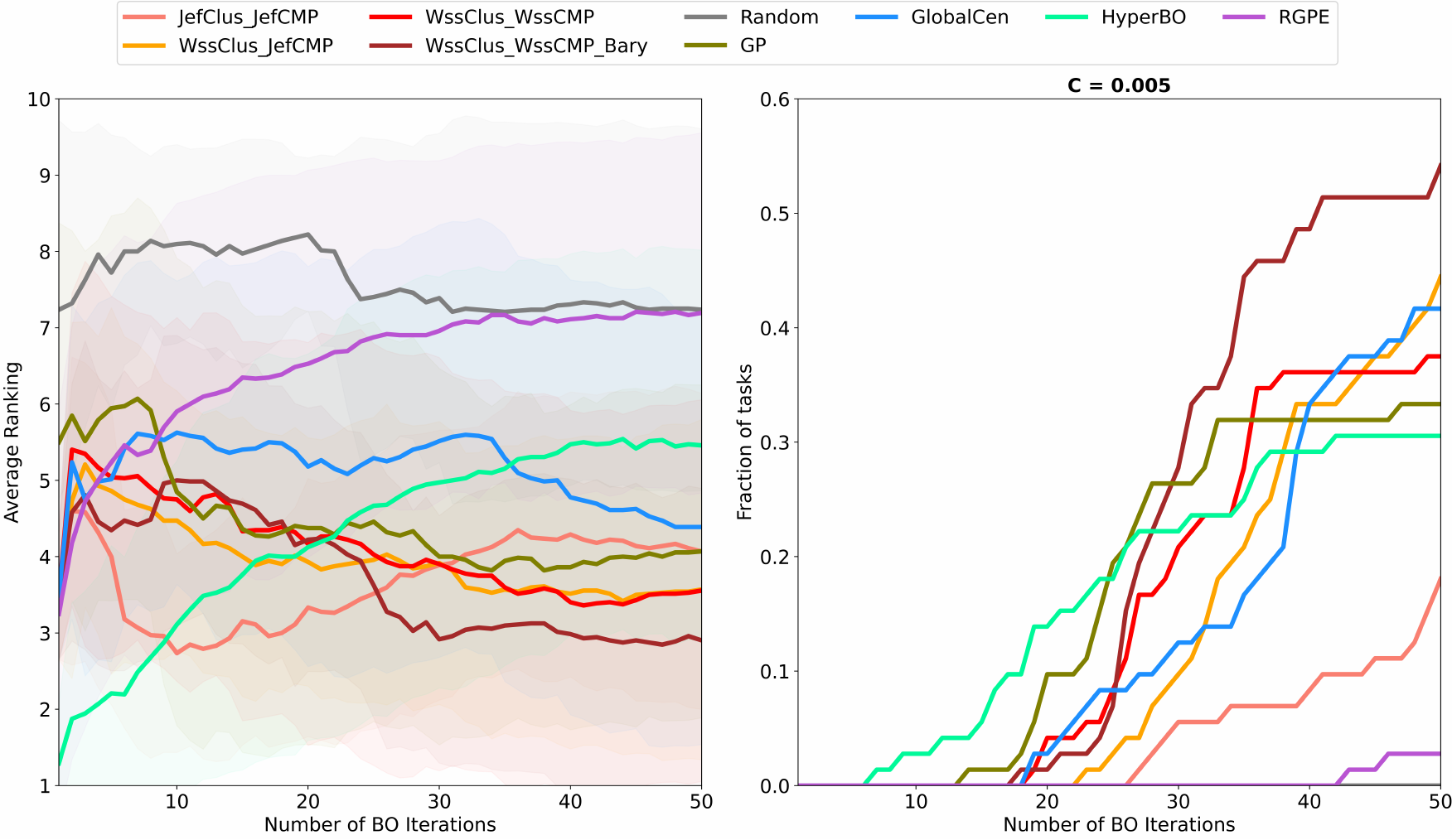}%
\end{minipage}
\end{figure}

\section{Conclusions}\label{sec:Conclusions}

We propose a clustering-based meta-BO framework that leverages historical
source tasks in real-world scenarios without assuming meta-task homogeneity.
To achieve this, we utilize geometric properties derived from statistical
distances throughout all stages, including clustering, cluster prototype
construction, and online adaptive prior construction. Experimental
results and theoretical guarantees demonstrate the robustness of $\model$
in practical meta-BO settings. Future directions include theoretical
analysis when using Wasserstein barycenter, tackling more challenging
scenarios such as hyperparameter optimization for modern deep learning
architectures, and employing more efficient clustering techniques
and promising statistical metric spaces. 

\vspace{-0.35cm}

\bibliographystyle{splncs04}
\bibliography{refs}

\appendix

\newpage

\section{Appendix 1: Towards theoretical guarantees on $\protect\model$}\label{sec:First-appendix}

\subsection{Existence of Adaptive Meta-Prior as Non-Degenerate Gaussian Process}

In this section, we present related theorems and proofs that provide
a theoretical guarantee for the validity of our proposed adaptive
meta-prior, as stated in Theorem \ref{them:gd_linear} and Proposition
\ref{prop:prior_non_degenerate} in Section \ref{subsec:cm-BO}.
\begin{theorem}
\label{them:gp_gd_them}A Gaussian process (GP) $\{X_{t}\}_{t\in T}$
indexed by a set $T$ is a family of (real-valued) random variables
$X_{t}$, all defined on the same probability space, such that for
any finite subset $F\subset T$ the random vector $X_{F}:=\{X_{t}\}_{t\in F}$
has a (possibly degenerate) Gaussian distribution (GD). Then, if these
finite-dimensional distributions are all non-degenerate then the Gaussian
process is said to be non-degenerate.
\end{theorem}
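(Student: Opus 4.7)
The plan is to recognize that this statement is essentially a two-part foundational claim: (i) a Gaussian process can be specified by prescribing its family of finite-dimensional Gaussian marginals, and (ii) requiring each such marginal to be non-degenerate yields a consistent notion of a non-degenerate GP. Since the content is largely definitional, the work consists of verifying that the prescription is well-posed and that the hereditary structure of non-degeneracy makes the second clause meaningful.

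First, I would justify the existence of the GP from its finite-dimensional distributions via Kolmogorov's extension theorem. This requires checking the consistency condition: for any finite $F'\subset F\subset T$, the marginal of $\mu_F$ onto the coordinates in $F'$ must coincide with $\mu_{F'}$. For Gaussian laws this is immediate, because the marginal of a Gaussian vector with mean $m_F$ and covariance $\Sigma_F$ on a subset $F'$ is Gaussian with mean $m_{F'}$ (the corresponding subvector) and covariance $\Sigma_{F',F'}$ (the corresponding principal submatrix). Given such consistency, Kolmogorov extension produces a unique probability measure on $\mathbb{R}^T$ (equipped with the product $\sigma$-algebra generated by cylinder sets) whose finite-dimensional marginals are exactly the prescribed Gaussians, and this measure realizes the family $\{X_t\}_{t\in T}$ on a common probability space.

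Next, I would verify the non-degeneracy clause by exploiting the hereditary property of positive definiteness under principal submatrix extraction. Concretely, if $\Sigma_F$ is positive definite and $F'\subset F$, then for any nonzero $v\in\mathbb{R}^{|F'|}$ one has $v^{\top}\Sigma_{F',F'}v=\tilde{v}^{\top}\Sigma_F\tilde{v}>0$, where $\tilde{v}$ extends $v$ by zeros on $F\setminus F'$. Hence non-degeneracy of every finite-dimensional marginal is an internally consistent requirement and propagates downward through arbitrary finite restrictions, validating the definition of a non-degenerate GP.

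The main technical subtlety, rather than an obstacle, is the careful invocation of Kolmogorov's extension theorem when $T$ is arbitrary (possibly uncountable), together with the routine verification of consistency for Gaussian marginals. No deeper machinery is required, and the argument is a classical textbook application of the extension theorem to the Gaussian setting, so the proof is essentially a bookkeeping exercise once these two ingredients are assembled.
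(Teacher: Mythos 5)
You have correctly diagnosed the key fact about this statement: it is a definition dressed as a theorem, and accordingly the paper supplies \emph{no proof of it at all} --- it is stated as standard background (the usual definition of a GP via finite-dimensional marginals, and of non-degeneracy) and then consumed by Corollaries \ref{cor:gp_gp}, \ref{cor:non_deg_gp}, and \ref{cor:pos_non_deg}. Your proposal therefore does not diverge from the paper's argument so much as exceed it: you verify that the definition is well-posed. Both of your ingredients are sound. (i) Consistency of Gaussian marginals --- the restriction of a Gaussian law with mean $m_F$ and covariance $\Sigma_F$ to $F'\subset F$ is Gaussian with the subvector mean $m_{F'}$ and principal submatrix $\Sigma_{F',F'}$ --- combined with Kolmogorov's extension theorem on the product $\sigma$-algebra of $\mathbb{R}^{T}$ (valid for arbitrary, possibly uncountable $T$) gives existence of a process realizing the prescribed marginals. (ii) The zero-padding argument $v\mapsto\tilde{v}$ correctly establishes that positive definiteness is hereditary under passage to principal submatrices, so non-degeneracy of the marginals is downward-consistent. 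Two caveats worth noting: the statement as written runs from a \emph{given} process to its marginals, so strictly speaking Kolmogorov is not needed unless one wants realizability from prescribed marginals, which is the charitable reading you adopted; and your step (ii) is a sanity check rather than a logical necessity, since the definition already quantifies over \emph{every} finite $F\subset T$, making the heredity verification a confirmation that the notion is non-vacuous rather than a required lemma. In short, your argument is correct and complete for what it undertakes, and it supplies rigor (well-posedness of the definitional framework) that the paper itself omits.
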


\begin{lemma}
A symmetric matrix $M$ with real entries is \emph{positive-definite}
if the real number $z^{T}Mz$ is positive for every non-zero real
column vector $z$.
\end{lemma}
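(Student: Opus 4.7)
The statement as written is essentially the defining property of a positive-definite symmetric real matrix, so strictly speaking there is nothing to establish beyond acknowledging the convention. I read it as a \emph{recollection}, placed here because the appendix needs it as a drop-in fact when arguing that the covariance matrices produced by $\model$ (discretized on any finite index subset) are invertible, which is exactly the non-degeneracy condition supplied by Theorem~\ref{them:gp_gd_them}. The plan, then, is not to prove the lemma from some deeper axiom but to record the equivalent characterizations that will actually be used in the non-degeneracy argument for Proposition~\ref{prop:prior_non_degenerate}.

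First, I would invoke the spectral theorem for real symmetric matrices: every symmetric $M\in\mathbb{R}^{n\times n}$ admits an orthonormal eigenbasis $v_1,\ldots,v_n$ with real eigenvalues $\lambda_1,\ldots,\lambda_n$. Substituting $z=v_i$ into the hypothesis $z^\top M z>0$ yields $\lambda_i=v_i^\top M v_i>0$, so every eigenvalue is strictly positive. From this I would derive two consequences used downstream: (i) $\det M=\prod_i\lambda_i>0$, hence $M$ is invertible; and (ii) the bilinear form $\langle z,w\rangle_M := z^\top M w$ is a genuine inner product on $\mathbb{R}^n$, which is what licenses Cholesky-type decompositions when the cluster-prototype covariance is evaluated at a finite grid $\mathcal{V}$.

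The only obstacle is bookkeeping rather than mathematical content: one must check the hypothesis on the class of matrices that actually arise in $\model$, namely finite-dimensional restrictions of kernel functions $k^{\mathcal{C}_i} = \tfrac{1}{|\mathcal{C}_i|}\sum_{\mathcal{GP}^{t_j}\in\mathcal{C}_i} k^{t_j}$ and non-negative linear combinations $k_0^{(\tau)} = \sum_i (w_{\mathcal{C}_i}^{(\tau-1)})^2 k^{\mathcal{C}_i}$. Since each $k^{t_j}$ is a posterior kernel arising from a positive-definite prior (e.g.\ Matérn) conditioned on finitely many observations, its Gram matrix on $\mathcal{V}$ is positive-definite, and a convex-conic combination of positive-definite matrices is positive-definite. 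Combined with the lemma above, this delivers invertibility of the Gram matrix of the synthesized prior, which is precisely what Proposition~\ref{prop:prior_non_degenerate} needs in order to conclude that the GP meta-priors constructed during the target task are non-degenerate on every finite index subspace.
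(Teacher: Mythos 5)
You are right that this lemma is purely definitional: the paper supplies no proof for it and treats it exactly as you do, as a recorded convention to be cited in the non-degeneracy chain leading to Proposition~\ref{prop:prior_non_degenerate}. The supplementary material you add---strictly positive eigenvalues via the spectral theorem (hence invertibility), and closure of positive-definiteness under positive conic combinations of posterior Gram matrices---is mathematically sound, but it duplicates what the paper records separately as Lemma~\ref{lem:eigen}, Lemma~\ref{lem:pos_def_mat_linear}, and Theorem~\ref{thm:pos_def_post}, so your treatment matches the paper's structure rather than diverging from it.
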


\begin{theorem}
\label{them:gd_linear_aux}The linear combination of independent Gaussian
random variables is also Gaussian. Suppose that $\mathbf{X}_{1},\mathbf{X}_{2},\ldots,\mathbf{X}_{N}$
are $N$ independent Gaussian distributed random variables, $\mathbf{Y}=\sum_{i=1}^{n}a_{i}\mathbf{X}_{i}$
($a_{i}\in\mathbb{R}$) is a linear combination of $\mathbf{X}_{i},i=1,2,\ldots,N$.
The random variable $\mathbf{Y}$ is also Gaussian, and if $\mathbf{X}_{i}\sim\mathcal{N}(\mu_{i},\sigma_{i}^{2}),i=1,2,\ldots,N$
then $\mathbf{Y}\sim\mathcal{N}(\sum_{i=1}^{N}a_{i}\mu_{i},\sum_{i=1}^{N}a_{i}^{2}\sigma_{i}^{2})$.
\end{theorem}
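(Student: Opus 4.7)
The plan is to prove Theorem~\ref{them:gd_linear_aux} by the characteristic function method, which turns the problem of understanding sums of independent random variables into a simple algebraic identity in the Fourier domain. The two ingredients I would invoke are (i) the factorization $\phi_{\mathbf{Y}}(t)=\prod_{i=1}^{N}\phi_{\mathbf{X}_i}(a_i t)$ valid for independent summands with scaling coefficients, and (ii) the uniqueness theorem for characteristic functions, which lets me identify a distribution from its transform.

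First I would record the characteristic function of a single Gaussian: for $\mathbf{X}_i\sim\mathcal{N}(\mu_i,\sigma_i^2)$ one has $\phi_{\mathbf{X}_i}(t)=\exp\!\bigl(i\mu_i t-\tfrac{1}{2}\sigma_i^2 t^2\bigr)$. Since $a_i\mathbf{X}_i$ has characteristic function $\phi_{\mathbf{X}_i}(a_i t)$, the independence assumption gives
\begin{equation}
\phi_{\mathbf{Y}}(t)=\prod_{i=1}^{N}\phi_{\mathbf{X}_i}(a_i t)=\exp\!\Bigl(it\sum_{i=1}^{N}a_i\mu_i-\tfrac{1}{2}t^2\sum_{i=1}^{N}a_i^2\sigma_i^2\Bigr),
\end{equation}
which is precisely the characteristic function of $\mathcal{N}\!\bigl(\sum_i a_i\mu_i,\sum_i a_i^2\sigma_i^2\bigr)$. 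Invoking uniqueness then yields the claimed distribution for $\mathbf{Y}$, including the Gaussianity.

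An equivalent route would be induction on $N$: the base case $N=1$ is the well-known fact that $a_1\mathbf{X}_1\sim\mathcal{N}(a_1\mu_1,a_1^2\sigma_1^2)$, and the inductive step reduces to proving closure of Gaussianity under the sum of two independent Gaussians, which can be done by direct convolution of densities and completing the square. I would prefer the characteristic-function route because the convolution computation, while elementary, is the only genuinely technical step, and the Fourier argument sidesteps it entirely by reducing everything to exponentiating a linear and a quadratic form in $t$.

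The main obstacle, such as it is, is conceptual rather than computational: one must be careful that the degenerate case $\sum_i a_i^2\sigma_i^2=0$ (which occurs only when all $a_i\sigma_i=0$) still fits the statement, since a Gaussian with zero variance is a Dirac mass at its mean, and this is allowed within the convention of Theorem~\ref{them:gp_gd_them} that Gaussian distributions may be degenerate. Once this edge case is acknowledged, the characteristic-function identity above completes the proof in a single line.
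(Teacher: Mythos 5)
Your proof is correct, and it takes a genuinely different---and strictly more complete---route than the paper's. The paper's own proof never actually establishes that $\mathbf{Y}$ is Gaussian: it opens by remarking that a Gaussian distribution is fully specified by its mean and covariance, and then only computes the two moments, $\mathbb{E}[\mathbf{Y}]=\sum_{i=1}^{N}a_i\mu_i$ by linearity of expectation and $\mathbb{V}[\mathbf{Y}]=\sum_{i=1}^{N}a_i^2\sigma_i^2$ via Bienaym\'e's identity with the cross-covariances vanishing by independence. That computation identifies \emph{which} Gaussian $\mathbf{Y}$ must be, conditional on it being Gaussian, but the closure of the Gaussian family under independent linear combinations---the substantive first sentence of the theorem---is simply asserted. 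Your characteristic-function argument supplies exactly that missing step: the factorization $\phi_{\mathbf{Y}}(t)=\prod_{i=1}^{N}\phi_{\mathbf{X}_i}(a_i t)$, valid by independence, yields $\exp\bigl(it\sum_{i=1}^{N} a_i\mu_i-\tfrac{1}{2}t^2\sum_{i=1}^{N} a_i^2\sigma_i^2\bigr)$, and uniqueness of characteristic functions then delivers Gaussianity and the parameter identification in one stroke, with the moment formulas appearing as a by-product rather than constituting the whole proof. Your attention to the degenerate case $\sum_{i=1}^{N} a_i^2\sigma_i^2=0$ is also well placed, since the surrounding framework (Theorem~\ref{them:gp_gd_them}) explicitly admits degenerate Gaussian distributions, and the characteristic-function identity covers the Dirac-mass case seamlessly, whereas the convolution-of-densities route you mention as an alternative would break down there. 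What the paper's approach buys is elementarity---no Fourier machinery, and the mean and covariance formulas are all that is actually reused downstream (e.g., in Corollary~\ref{cor:non_deg_gp})---but as a proof of the stated theorem it has a gap that your argument closes.
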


\begin{proof}
First, it is important to note that a Gaussian distribution is fully
specified by its mean vector and covariance matrix. Therefore, we
can straightforwardly prove that

\begin{equation}
\begin{aligned}
\mathbb{E}[\mathbf{Y}] &= \sum_{i=1}^{N}a_{i}\mu_{i},\\
\mathbb{V}[\mathbf{Y}] &= \sum_{i=1}^{N}a^{2}_{i}\sigma_{i}^{2}\\
\end{aligned}
\end{equation}For the mean, we have

\begin{equation}
\mathbb{E}[\mathbf{Y}] = \mathbb{E}\biggl[\sum_{i=1}^{N}a_{i}\mathbf{X}_{i}\biggr] = \sum_{i=1}^{N}a_{i}\mathbb{E}[\mathbf{\mathbf{X}_i}] = \sum_{i=1}^{N}a_{i}\mu_{i}
\end{equation}from linearity of expectations. 

For the covariance matrix, we can expand $\mathbb{V}[\mathbf{Y}]$
as a result from general Bienaymé's identity:

\begin{equation}
\begin{aligned}
\mathbb{V}[\mathbf{Y}]&=\mathbb{V}\biggl[\sum_{i=1}^{N}a_{i}\mathbf{X}_{i}\biggr]\\
&=\sum_{i,j=1}^{N}a_{i}a_{j}Cov(\mathbf{X}_{i},\mathbf{X}_{j})\\
&=\sum_{i=1}^{N}a_{i}^{2}\mathbb{V}(\mathbf{X}_{i})+2\sum_{1 \le i\le j \le N}a_{i}a_{j}Cov(\mathbf{X}_{i},\mathbf{X}_{j})=\sum_{i=1}^{N}a_{i}^{2}\mathbb{V}(\mathbf{X}_{i})=\sum_{i=1}^{N}a_{i}^{2}\sigma_{i}^{2}
\end{aligned}
\end{equation} noting that $Cov(\mathbf{X}_{i},\mathbf{X}_{j})=0,\forall(i\neq j)$,
which arises from assuming that $\mathbf{X}_{i}$ $(i=1,2,\ldots,N)$
are independent Gaussian random variables.
\end{proof}

\begin{proposition}
\label{prop:pos_def_gd}The multivariate normal distribution (a.k.a
finite-dimensional GD) is said to be non-degenerate when the symmetric
covariance matrix $\mathbf{\Sigma}$ is positive-definite.
\end{proposition}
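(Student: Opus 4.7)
The plan is to prove the equivalence between non-degeneracy of a multivariate normal distribution and positive-definiteness of its covariance matrix $\mathbf{\Sigma}$. Recall from Theorem~\ref{them:gp_gd_them} that non-degeneracy of a finite-dimensional Gaussian means the distribution is not concentrated on a lower-dimensional affine subspace of $\mathbb{R}^n$; equivalently, it admits a density with respect to the $n$-dimensional Lebesgue measure. The proof is a clean ``if and only if'' and splits naturally along the two directions.

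First I would establish that any covariance matrix $\mathbf{\Sigma}$ is symmetric and positive-semi-definite (PSD), since for every $v\in\mathbb{R}^n$ we have $v^\top \mathbf{\Sigma} v = \operatorname{Var}(v^\top \mathbf{X}) \ge 0$. This reduces the statement to showing that non-degeneracy rules out the PSD-but-not-PD case, and conversely that positive-definiteness yields non-degeneracy.

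For the forward direction (non-degenerate $\Rightarrow$ positive-definite), I would argue by contrapositive: assume $\mathbf{\Sigma}$ is PSD but not PD, so there exists a nonzero $v\in\mathbb{R}^n$ with $v^\top \mathbf{\Sigma} v = 0$. Then $\operatorname{Var}(v^\top \mathbf{X}) = 0$, so $v^\top \mathbf{X} = v^\top \boldsymbol{\mu}$ almost surely. Hence $\mathbf{X}$ lives in the affine hyperplane $\{x\in\mathbb{R}^n : v^\top x = v^\top \boldsymbol{\mu}\}$, which has Lebesgue measure zero. This contradicts non-degeneracy in the sense of Theorem~\ref{them:gp_gd_them}.

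For the reverse direction (positive-definite $\Rightarrow$ non-degenerate), I would use that a PD $\mathbf{\Sigma}$ is invertible with strictly positive determinant, so the standard Gaussian density
\begin{equation}
f(x) = \frac{1}{(2\pi)^{n/2}\,|\mathbf{\Sigma}|^{1/2}}\exp\!\left(-\tfrac{1}{2}(x-\boldsymbol{\mu})^\top \mathbf{\Sigma}^{-1}(x-\boldsymbol{\mu})\right)
\end{equation}
is well-defined and strictly positive on all of $\mathbb{R}^n$. One can verify via the standard factorization $\mathbf{X} = \boldsymbol{\mu} + \mathbf{A}\mathbf{Z}$ with $\mathbf{A}\mathbf{A}^\top = \mathbf{\Sigma}$ and $\mathbf{Z}\sim\mathcal{N}(0,\mathbf{I})$ that $\mathbf{X}$ has this density, so its support is the whole space and it is non-degenerate. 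The main ``obstacle'' is largely bookkeeping: making precise the equivalence between the geometric notion of non-degeneracy (no lower-dimensional support) and the analytic condition $\det \mathbf{\Sigma} > 0$, which together with Theorem~\ref{them:gd_linear_aux} and Theorem~\ref{them:gd_linear} then closes the loop needed to justify Proposition~\ref{prop:prior_non_degenerate}.
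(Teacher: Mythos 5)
Your argument is mathematically correct, but it differs from the paper in a basic way: the paper gives no proof of Proposition~\ref{prop:pos_def_gd} at all. The statement is phrased as a definition (``is said to be non-degenerate when \ldots''), i.e., the paper simply adopts the standard convention that a finite-dimensional Gaussian is called non-degenerate if and only if its covariance matrix is positive-definite, and then uses that convention as a hook in Lemma~\ref{lem:pos_def_mat_linear} and Corollary~\ref{cor:non_deg_gp}. You instead read ``non-degenerate'' as an independently meaningful geometric property (not concentrated on a lower-dimensional affine subspace, equivalently admitting a Lebesgue density) and prove the equivalence with positive-definiteness: the forward direction via the zero-variance direction $v^{\top}\mathbf{\Sigma}v=0 \Rightarrow v^{\top}\mathbf{X}=v^{\top}\boldsymbol{\mu}$ almost surely, so the law sits on a measure-zero hyperplane, and the reverse direction via the factorization $\mathbf{X}=\boldsymbol{\mu}+\mathbf{A}\mathbf{Z}$ with $\mathbf{A}\mathbf{A}^{\top}=\mathbf{\Sigma}$ invertible, yielding the explicit density. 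Both steps are the standard textbook argument and are sound. What each approach buys: the paper's definitional treatment is economical and suffices for its downstream use (all it needs is the algebraic criterion to propagate positive-definiteness through the linear combination in Corollary~\ref{cor:non_deg_gp}); your version grounds the convention by showing the algebraic condition coincides with the intuitive geometric notion of non-degeneracy, at the cost of importing measure-theoretic machinery (Lebesgue measure, supports) that the paper never otherwise invokes. One small trim: the closing appeal to Theorems~\ref{them:gd_linear_aux} and~\ref{them:gd_linear} is unnecessary for this proposition itself --- those results enter only later, when non-degeneracy is propagated to the synthesized meta-prior in Proposition~\ref{prop:prior_non_degenerate}.
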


\begin{lemma}
\label{lem:pos_def_mat_linear}Suppose that $\mathbf{A}_{1},\mathbf{A}_{2},\ldots,\mathbf{A}_{N}$
are $N$ positive definite matrices, then $\sum_{i=1}^{N}a_{i}\mathbf{A}_{i}$
$(a_{i}\in\mathbb{R},a_{i}>0)$ is a positive definite matrix.
\end{lemma}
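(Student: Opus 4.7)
The plan is to reduce the lemma directly to the defining quadratic-form characterization of positive-definiteness recalled just before its statement in the excerpt (a symmetric real matrix $M$ is positive-definite iff $z^{\top}Mz>0$ for every nonzero $z\in\mathbb{R}^{n}$). So the proof splits naturally into two short verifications: symmetry of the linear combination, and strict positivity of its quadratic form on nonzero vectors.

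First I would check symmetry. Each $\mathbf{A}_{i}$ is positive-definite and hence symmetric, so $(a_{i}\mathbf{A}_{i})^{\top}=a_{i}\mathbf{A}_{i}^{\top}=a_{i}\mathbf{A}_{i}$, and taking transposes commutes with finite sums; therefore $\bigl(\sum_{i=1}^{N}a_{i}\mathbf{A}_{i}\bigr)^{\top}=\sum_{i=1}^{N}a_{i}\mathbf{A}_{i}$. Second, for the quadratic form, I would fix an arbitrary nonzero $z\in\mathbb{R}^{n}$ and use linearity of the bilinear form $z\mapsto z^{\top}(\cdot)z$ to expand
\begin{equation}
z^{\top}\!\Bigl(\sum_{i=1}^{N}a_{i}\mathbf{A}_{i}\Bigr)z=\sum_{i=1}^{N}a_{i}\bigl(z^{\top}\mathbf{A}_{i}z\bigr).
\end{equation}
Since $\mathbf{A}_{i}$ is positive-definite, $z^{\top}\mathbf{A}_{i}z>0$ for every $i$, and by hypothesis every $a_{i}>0$, so each summand is strictly positive and the total sum is strictly positive. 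Combining symmetry with this strict positivity yields the claim.

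There is essentially no obstacle here: the result is an immediate consequence of the fact that the cone of positive-definite symmetric matrices is closed under strictly positive linear combinations, which in turn follows term-by-term from the quadratic-form definition. The only mild point of care is the \emph{strictness}, which requires each coefficient $a_{i}$ to be strictly positive (not merely nonnegative); I would remark on this explicitly so that the lemma plugs cleanly into Proposition~\ref{prop:prior_non_degenerate}, where it is invoked with coefficients of the form $\nicefrac{1}{|\mathcal{C}_{i}|}$ and $(w_{\mathcal{C}_{i}}^{(\tau-1)})^{2}$, both of which are strictly positive by construction (softmax weights are bounded away from $0$, and cluster sizes are positive integers). This ensures the covariance matrix of the synthesized meta-prior, a finite positive combination of positive-definite cluster-prototype covariance matrices, is itself positive-definite, giving the non-degeneracy needed in Theorem~\ref{them:gp_gd_them} and the required invertibility for the BO posterior updates.
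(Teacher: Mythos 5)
Your proof is correct and takes essentially the same route as the paper's: both expand the quadratic form via linearity, $z^{\top}\bigl(\sum_{i=1}^{N}a_{i}\mathbf{A}_{i}\bigr)z=\sum_{i=1}^{N}a_{i}\,z^{\top}\mathbf{A}_{i}z$, and conclude strict positivity term by term from $a_{i}>0$ together with the positive-definiteness of each $\mathbf{A}_{i}$. The only differences are cosmetic --- you additionally verify symmetry explicitly and flag that strictness of the coefficients is what makes the lemma usable in Proposition~\ref{prop:prior_non_degenerate}, both points the paper leaves implicit.
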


\begin{proof}
Let ${A}_{i},i=\overline{1,N}$ be positive definite matrices, that
means $\forall h\in\mathbb{R}^{n}$, we have $h^{T}A_{i}h>0,i=\overline{1,N}$.
With $a_{i}\in\mathbb{R},a_{i}>0,i=\overline{1,N}$, it can be directly
inferred that $a_{i}h^{T}A_{i}h>0\hspace{0.1cm}\forall h\in\mathbb{R}^{n},i=\overline{1,N}$.
Hence, 

\begin{equation}
\sum_{i=1}^{N}a_{i}h^{T}A_{i}h > 0\hspace{0.1cm}\forall h \in \mathbb{R}^{n}
\end{equation}From the distributive laws of matrix multiplication, we have

\begin{equation}
\sum_{i=1}^{N}a_{i}h^{T}A_{i}h = h^{T}\left(\sum_{i=1}^{N}a_{i}A_{i}\right)h > 0\hspace{0.1cm}\forall h \in \mathbb{R}^{n}
\end{equation}This implies that $\sum_{i=1}^{N}a_{i}\mathbf{A}_{i}$ $(a_{i}\in\mathbb{R},a_{i}>0)$
is a positive definite matrix.
\end{proof}

\begin{corollary}
\label{cor:gp_gp}Let $X_{k}=\{X_{k,t}\}_{t\in T}$ be a Gaussian
process for $k=\overline{1,N}$. Assuming that any two GPs $\{X_{i,t}\}_{t\in T}$
and $\{X_{j,t}\}_{t\in T}$ ($i\neq j;i,j=\overline{1,N}$) are independent,
we can then conclude that the linear combination of these $N$ GPs,
denoted as $\bar{X}=\left\{ \sum_{k=1}^{N}a_{k}X_{k,t}\right\} _{t\in T}$,
where $a_{i}\in\mathbb{R}$, $a_{i}>0$, and $i=\overline{1,N}$,
is also a GP.
\end{corollary}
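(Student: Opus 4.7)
The plan is to invoke the defining property of a Gaussian process given in Theorem~\ref{them:gp_gd_them}: to show that $\bar{X} = \{\sum_{k=1}^N a_k X_{k,t}\}_{t\in T}$ is a GP, it suffices to show that for every finite subset $F = \{t_1,\ldots,t_m\} \subset T$ the random vector $\bar{X}_F := (\bar{X}_{t_1}, \ldots, \bar{X}_{t_m})$ is (possibly degenerate) multivariate Gaussian. Since $F$ is arbitrary, the GP conclusion then follows immediately.

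First I would fix such an $F$. Because each $X_k$ is a GP, the vector $X_{k,F} := (X_{k,t_1}, \ldots, X_{k,t_m})$ is multivariate Gaussian for every $k = 1,\ldots,N$. By the pairwise independence assumption of the corollary (which, for jointly Gaussian families, yields mutual independence), the vectors $X_{1,F}, \ldots, X_{N,F}$ are mutually independent multivariate Gaussian random vectors in $\mathbb{R}^m$. Writing $\bar{X}_F = \sum_{k=1}^N a_k X_{k,F}$ therefore expresses the target vector as a positive-coefficient linear combination of independent multivariate Gaussians.

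The remaining step is to justify that such a linear combination remains multivariate Gaussian. Theorem~\ref{them:gd_linear_aux} only addresses the univariate case, so the natural bridge is the Cram\'er--Wold device: a vector $Z \in \mathbb{R}^m$ is multivariate Gaussian iff every scalar projection $c^\top Z$ ($c \in \mathbb{R}^m$) is univariate Gaussian. For any fixed $c$, I would write
\begin{equation}
c^\top \bar{X}_F = \sum_{k=1}^N a_k \bigl(c^\top X_{k,F}\bigr),
\end{equation}
where each summand $c^\top X_{k,F}$ is univariate Gaussian (as a linear functional of a multivariate Gaussian) and the summands are independent across $k$ (by independence of the $X_{k,F}$). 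Theorem~\ref{them:gd_linear_aux} then yields that $c^\top \bar{X}_F$ is univariate Gaussian, and Cram\'er--Wold lifts this to multivariate Gaussianity of $\bar{X}_F$. Explicitly, the mean vector is $\sum_{k=1}^N a_k \mu_{k,F}$ and the covariance is $\sum_{k=1}^N a_k^2 \Sigma_{k,F}$, reproducing the parameters used in Theorem~\ref{them:gd_linear}.

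The step I expect to be the main obstacle is precisely this bridge from the scalar result of Theorem~\ref{them:gd_linear_aux} to a multivariate conclusion, since the previously established material is purely one-dimensional. Using the Cram\'er--Wold device sidesteps the need to redevelop characteristic-function machinery and lets us reuse the univariate theorem verbatim. Since the finite set $F \subset T$ was arbitrary, every finite-dimensional marginal of $\bar{X}$ is Gaussian, so $\bar{X}$ is a Gaussian process, completing the proof.
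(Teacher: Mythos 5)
Your proof is correct in substance and follows the same skeleton as the paper's: fix an arbitrary finite index set $F\subset T$, invoke the finite-dimensional characterization of a GP from Theorem~\ref{them:gp_gd_them}, and close under linear combinations of independent Gaussians. The genuine difference lies in how the key step is justified. The paper simply cites Theorem~\ref{them:gd_linear_aux} to conclude that $\bar{X}_{F}$ has a GD, silently treating that theorem as multivariate even though its statement (written with variances $\sigma_{i}^{2}$) and its proof (via Bienaym\'e's identity, which only computes the parameters) are naturally read as univariate. You instead bridge this explicitly with the Cram\'er--Wold device, applying Theorem~\ref{them:gd_linear_aux} to each scalar projection $c^{\top}\bar{X}_{F}=\sum_{k=1}^{N}a_{k}\left(c^{\top}X_{k,F}\right)$ and lifting the result to $\mathbb{R}^{m}$. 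This is a more careful argument than the paper's and correctly reproduces the parameters $\sum_{k}a_{k}\mu_{k,F}$ and $\sum_{k}a_{k}^{2}\Sigma_{k,F}$ of Theorem~\ref{them:gd_linear}; what it buys is a rigorous passage from the one-dimensional material actually established in the appendix to the multivariate conclusion the corollary needs.

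One caveat deserves attention: your parenthetical claim that pairwise independence ``for jointly Gaussian families yields mutual independence'' is circular in this setting, because joint Gaussianity of the concatenated family $\left(X_{1,F},\ldots,X_{N,F}\right)$ is precisely what is not given. Pairwise independent Gaussian vectors need not be jointly Gaussian nor mutually independent, and under pairwise independence alone the conclusion can actually fail --- there exist three pairwise independent standard normal variables whose sum is not normal. Both your application of Theorem~\ref{them:gd_linear_aux} across the $N$ summands and your claim that the projections $c^{\top}X_{k,F}$ are independent across $k$ require \emph{mutual} independence as a hypothesis. The honest fix is to read the corollary's hypothesis as mutual independence of the $N$ processes, which is exactly what the paper's own proof implicitly assumes when it invokes Theorem~\ref{them:gd_linear_aux}; the corollary's wording shares this imprecision, so the flaw is inherited from the paper rather than introduced by you, but your attempt to derive mutual independence from the pairwise assumption should be deleted rather than kept as stated.
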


\begin{proof}
For any finite index subset $F\subset T$, 

the random vector $\bar{X}_{F}:=\{\sum_{k=1}^{N}a_{k}X_{k,t}\}_{t\in F}$
derived from $\left\{ \sum_{k=1}^{N}a_{k}X_{k,t}\right\} _{t\in T}$
follows a GD. This is because each $\{X_{k,t}\}_{t\in F}$, where
$k=\overline{1,N}$, also has a GD due to the fact that each $X_{k}=\{X_{k,t}\}_{t\in T}$
is a GD (directly inferred from Theorem \ref{them:gp_gd_them} and
Theorem \ref{them:gd_linear_aux}). Therefore, we can conclude that
$\bar{X}=\left\{ \sum_{k=1}^{N}a_{k}X_{k,t}\right\} _{t\in T}$ is
a GP over the index set $T$.

It is straightforward to derive the result of our Theorem \ref{them:gd_linear}
from Corollary \ref{cor:gp_gp}.
\end{proof}

\begin{corollary}
\label{cor:non_deg_gp}Suppose that all GPs $X_{k}=\{X_{k,t}\}_{t\in T},k=\overline{1,N}$
(from \ref{cor:gp_gp}) are non-degenerate. Then, $\bar{X}=\left\{ \sum_{k=1}^{N}a_{k}X_{k,t}\right\} _{t\in T}$
is also a non-degenerate GP.
\end{corollary}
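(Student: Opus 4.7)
The plan is to use the characterization of non-degenerate GPs from Theorem~\ref{them:gp_gd_them}: it suffices to show that for every finite index subset $F \subset T$, the random vector $\bar{X}_F := \{\sum_{k=1}^N a_k X_{k,t}\}_{t \in F}$ has a non-degenerate Gaussian distribution. By Corollary~\ref{cor:gp_gp}, we already know $\bar{X}$ is a GP, so $\bar{X}_F$ is Gaussian; what remains is to verify non-degeneracy, which by Proposition~\ref{prop:pos_def_gd} amounts to showing that its covariance matrix is positive definite.

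First, I would fix an arbitrary finite subset $F \subset T$ and, for each $k = \overline{1, N}$, consider the Gaussian vector $X_{k,F} := \{X_{k,t}\}_{t \in F}$ with covariance matrix $\Sigma_k$. Since each $X_k$ is non-degenerate by hypothesis, Proposition~\ref{prop:pos_def_gd} gives that every $\Sigma_k$ is positive definite. Next, I would compute the covariance matrix of $\bar{X}_F$ directly. Using independence of $X_1, \ldots, X_N$, all cross-covariance terms $\mathrm{Cov}(X_{i,s}, X_{j,t})$ with $i \neq j$ vanish (exactly as in the Bienaymé-style expansion carried out in the proof of Theorem~\ref{them:gd_linear_aux}), so the covariance matrix of $\bar{X}_F$ reduces to the weighted sum $\sum_{k=1}^N a_k^2 \Sigma_k$.

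Finally, because $a_k > 0$ implies $a_k^2 > 0$, I would invoke Lemma~\ref{lem:pos_def_mat_linear} on the positive definite matrices $\Sigma_1, \ldots, \Sigma_N$ with strictly positive coefficients $a_1^2, \ldots, a_N^2$ to conclude that $\sum_{k=1}^N a_k^2 \Sigma_k$ is positive definite. Proposition~\ref{prop:pos_def_gd} then yields that $\bar{X}_F$ is a non-degenerate Gaussian distribution. Since $F \subset T$ was arbitrary, Theorem~\ref{them:gp_gd_them} gives that $\bar{X}$ is a non-degenerate GP.

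This argument is essentially a routine assembly of previously established pieces, so I do not anticipate serious difficulty. The only subtle point, and hence the mild obstacle worth spelling out carefully, is ensuring that cross-covariance terms truly vanish under the independence hypothesis so that the resulting covariance is genuinely a \emph{positive} linear combination (with coefficients $a_k^2$) of positive definite matrices; otherwise Lemma~\ref{lem:pos_def_mat_linear} could not be applied as a black box. Once this is made explicit, the non-degeneracy conclusion follows immediately.
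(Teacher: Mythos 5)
Your proof is correct and takes essentially the same route as the paper's: reduce to finite-dimensional marginals via Theorem~\ref{them:gp_gd_them}, note each $\Sigma_{k}$ is positive-definite by Proposition~\ref{prop:pos_def_gd}, identify the covariance of $\bar{X}_{F}$ as a positive linear combination of the $\Sigma_{k}$ using independence (Theorem~\ref{them:gd_linear_aux}), and conclude with Lemma~\ref{lem:pos_def_mat_linear} and Proposition~\ref{prop:pos_def_gd}. You are in fact slightly more careful than the paper, which writes the covariance of $\bar{X}_{F}$ as $\sum_{k=1}^{N}a_{k}\Sigma_{k;t\in F}$ rather than the correct $\sum_{k=1}^{N}a_{k}^{2}\Sigma_{k;t\in F}$ --- a harmless slip, since $a_{k}>0$ gives $a_{k}^{2}>0$ and the same lemma applies.
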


\textit{Proof.} Using the notation from Corollary \ref{cor:gp_gp},
where $X_{k}=\{X_{k,t}\}_{t\in T}$ is a non-degenerate GP, we can
conclude that every random vector $\{X_{k,t}\}_{t\in F}$ derived
from it over an index subset $F$ follows a non-degenerate GD (Theorem
\ref{them:gp_gd_them}). Therefore, the covariance matrix $\Sigma_{k;t\in F}$
of the corresponding GD for $\{X_{k,t}\}_{t\in F}$ is positive-definite
(Proposition \ref{prop:pos_def_gd}). Additionally, we have known
that the random vector $\bar{X}_{F}:=\{\sum_{k=1}^{N}a_{k}X_{k,t}\}_{t\in F}$
derived from $\left\{ \sum_{k=1}^{N}a_{k}X_{k,t}\right\} _{t\in T}$
follows a GD. We have that this distribution has its covariance matrix
given by $\sum_{k}^{N}a_{k}\Sigma_{k;t\in F}$, which is also positive-definite
(according to Theorem \ref{them:gd_linear_aux} and Lemma \ref{lem:pos_def_mat_linear}).
This implies that $\bar{X}_{F}$ also follows a non-degenerate GD
for every $F\subset T$ (as defined in Proposition \ref{prop:pos_def_gd}).
Therefore, we can conclude that $\left\{ \sum_{k=1}^{N}a_{k}X_{k,t}\right\} _{t\in T}$
is a non-degenerate GP.
\begin{proposition}
\label{prop:pos_def_kernel}A kernel is a symmetric continuous function:
$K:[a,b]\times[a,b]\rightarrow\mathbb{R}$ where symmetric means that
$K(x,y)=K(y,x)$ for all $x,y\in[a,b]$. $K$ is said to be a positive-definite
kernel if and only if
\begin{equation}
\sum_{i=1}^{n}\sum_{j=1}^{n}K(x_{i},x_{j})c_{i}c_{j}\ge0
\end{equation}
for all finite sequences of points $x_{1},\ldots,x_{n}$ of $[a,b]$
and all choices of real numbers $c_{1},\ldots,c_{n}$. This is also
a special case of Mercer's theorem. The equation holds when $c_{i}=0$
$(\forall i)$.
\end{proposition}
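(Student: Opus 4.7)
The bulk of Proposition \ref{prop:pos_def_kernel} is a definitional statement: it declares what it means for a symmetric continuous kernel $K \colon [a,b] \times [a,b] \to \mathbb{R}$ to be positive-definite, by requiring that the quadratic form $\sum_{i=1}^{n}\sum_{j=1}^{n} K(x_i,x_j)\, c_i c_j$ be nonnegative for every finite sequence of points in $[a,b]$ and every choice of real coefficients. There is nothing to prove in this framing; it merely fixes terminology that will be invoked elsewhere (e.g., when justifying that the Matérn-kernel prior used in our pipeline induces non-degenerate finite-dimensional GDs through Lemma \ref{lem:pos_def_mat_linear}). My plan is therefore to dispatch the definitional content with a single explanatory sentence and then focus attention on the one concrete assertion at the end.

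The only genuine claim that admits a derivation is the closing remark that the defining inequality is attained with equality when $c_i = 0$ for every $i$. I will prove this by direct substitution: plugging $c_1 = c_2 = \cdots = c_n = 0$ into the double sum makes each summand $K(x_i,x_j)\, c_i c_j$ vanish (because the factor $c_i$ is zero), so the entire sum collapses to $0$. Since $0 \ge 0$, the inequality holds, and equality is attained. No appeal to symmetry, continuity, or Mercer's theorem is required for this step; the claim is purely algebraic.

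The main obstacle, in honesty, is recognizing that there is no substantive obstacle here. The zero-coefficient case is a sanity observation confirming that the trivial choice never rules out positive-definiteness, i.e.\ the definition is never vacuous. Because the proposition will be used downstream only through its definitional content (ensuring that the covariance matrices $\Sigma_{k;t\in F}$ appearing in Corollary \ref{cor:non_deg_gp} are positive-definite whenever the underlying kernel is), I will present the one-line argument for the $c_i = 0$ case and leave no other verification to perform.
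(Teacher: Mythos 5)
Your reading matches the paper exactly: the paper states this proposition without proof, treating it as the standard definition of a positive-definite kernel (invoked later only through Corollary~\ref{cor:kernel_pos_def} and the Bochner/Mat\'ern argument), and your one-line substitution $c_1=\cdots=c_n=0$ correctly disposes of the sole verifiable claim that equality is attained in the trivial case. Nothing is missing, and no different approach is taken.
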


\begin{proposition}
\label{prop:pos_def_char}A $n\times n$ symmetric real matrix $M$
is said to be \textbf{positive-definite} if $\mathbf{x}^{T}M\mathbf{x}>0$
for all non-zero $\mathbf{x}$ in $\mathbb{R}^{n}$. Formally, 
\begin{align}
M\text{ positive-definite \ensuremath{\Longleftrightarrow}} & \forall x\in\mathbb{R}^{n}\backslash\{\mathbf{0}\},\mathbf{x}^{T}M\mathbf{x}>0\\
\Longleftrightarrow & \forall x_{1},\ldots,x_{n}\in\mathbb{R}\hspace{0.1cm}|\hspace{0.1cm}\exists x_{k},x_{k}\neq0\\
 & \sum_{i}^{n}\sum_{j}^{n}M_{ij}x_{i}x_{j}>0
\end{align}
\end{proposition}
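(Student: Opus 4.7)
The proposition asserts the equivalence of two characterizations of positive-definiteness for a symmetric real matrix $M\in\mathbb{R}^{n\times n}$: the matrix form $\mathbf{x}^{T} M \mathbf{x} > 0$ for every non-zero $\mathbf{x}$, and the coordinate form $\sum_{i=1}^{n}\sum_{j=1}^{n} M_{ij}\, x_{i}\, x_{j} > 0$ for every tuple $(x_{1},\ldots,x_{n})$ in which at least one component is non-zero. My plan is to unfold the definitions in two elementary steps: first, rewrite the vector-level quantifier in terms of coordinates, and second, expand the quadratic form into the double-sum expression.

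For the first step, I would note that by the very definition of the zero vector in $\mathbb{R}^{n}$, the condition $\mathbf{x}\in\mathbb{R}^{n}\setminus\{\mathbf{0}\}$ is logically identical to the existence of some index $k\in\{1,\ldots,n\}$ with $x_{k}\neq 0$. This accounts for the equivalence of the two quantifier blocks in the statement. For the second step, I would apply the definition of matrix-vector multiplication, giving $(M\mathbf{x})_{j}=\sum_{i=1}^{n} M_{ji}\, x_{i}$, and then take the inner product with $\mathbf{x}^{T}$ to obtain $\mathbf{x}^{T} M \mathbf{x}=\sum_{j=1}^{n} x_{j}\sum_{i=1}^{n} M_{ji}\, x_{i}=\sum_{i=1}^{n}\sum_{j=1}^{n} M_{ji}\, x_{i}\, x_{j}$. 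Either by renaming dummy indices and using $x_{i} x_{j}=x_{j} x_{i}$, or by invoking the symmetry hypothesis $M_{ji}=M_{ij}$, this sum coincides with the stated expression $\sum_{i=1}^{n}\sum_{j=1}^{n} M_{ij}\, x_{i}\, x_{j}$.

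Combining the two steps, both the quantifier and the value of the quadratic form have been shown to be identical to their coordinate-level counterparts, so the two strict inequalities hold under exactly the same conditions and the claimed equivalence follows. I do not anticipate any substantive obstacle: the proposition is essentially a restatement of the defining condition for positive-definiteness, and the entire argument is a bookkeeping exercise on indices together with one standard expansion of a quadratic form. The only minor care required is to track the order of summation indices when passing between $M_{ji} x_{i} x_{j}$ and $M_{ij} x_{i} x_{j}$, which is handled by the symmetry of $M$ or, equivalently, by the commutativity of the scalar product $x_{i} x_{j}$.
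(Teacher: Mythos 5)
Your proof is correct. The paper states this proposition without any proof, treating it as a purely definitional characterization, and your two-step unfolding --- identifying $\mathbf{x}\in\mathbb{R}^{n}\setminus\{\mathbf{0}\}$ with the existence of a non-zero coordinate, then expanding $\mathbf{x}^{T}M\mathbf{x}=\sum_{i=1}^{n}\sum_{j=1}^{n}M_{ij}x_{i}x_{j}$ via matrix-vector multiplication (where, as you note, the index relabeling needs only $x_{i}x_{j}=x_{j}x_{i}$, so even the symmetry hypothesis is dispensable here) --- is exactly the routine verification the paper leaves implicit.
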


\begin{lemma}
\label{lem:eigen}Let $M$ be an $n\times n$ Hermitian matrix (this
includes real symmetric matrices). $M$ is positive-definite if and
only if all of its eigenvalues are positive.
\end{lemma}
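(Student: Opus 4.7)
The plan is to invoke the \emph{spectral theorem} for Hermitian matrices, which states that any Hermitian $M$ admits a decomposition $M=UDU^{*}$ with $U$ unitary and $D=\mathrm{diag}(\lambda_{1},\ldots,\lambda_{n})$ containing the (necessarily real) eigenvalues of $M$. This decomposition diagonalizes the quadratic form $\mathbf{x}^{*}M\mathbf{x}$ in the eigenbasis, from which both directions of the equivalence fall out almost immediately. The real-symmetric case singled out in Proposition~\ref{prop:pos_def_char} is then a direct specialization, with $U$ orthogonal and $\mathbf{x}^{T}$ replacing $\mathbf{x}^{*}$.

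For the forward direction (all eigenvalues positive $\Rightarrow$ $M$ positive-definite), I would take any non-zero $\mathbf{x}$ and set $\mathbf{y}:=U^{*}\mathbf{x}$. Since $U$ is unitary, $\mathbf{y}$ is also non-zero, and a direct computation gives
\[
\mathbf{x}^{*}M\mathbf{x}=\mathbf{x}^{*}UDU^{*}\mathbf{x}=\mathbf{y}^{*}D\mathbf{y}=\sum_{i=1}^{n}\lambda_{i}\lvert y_{i}\rvert^{2}>0,
\]
because every $\lambda_{i}$ is positive by hypothesis and at least one coordinate $y_{i}$ is non-zero. This matches exactly the defining condition in Proposition~\ref{prop:pos_def_char}, so $M$ is positive-definite.

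For the converse, I would argue pointwise on eigenvectors: pick any eigenvalue $\lambda$ of $M$ with a non-zero eigenvector $\mathbf{v}$, so that $M\mathbf{v}=\lambda\mathbf{v}$. Then
\[
0<\mathbf{v}^{*}M\mathbf{v}=\lambda\,\mathbf{v}^{*}\mathbf{v}=\lambda\lVert\mathbf{v}\rVert^{2},
\]
where strict positivity of the left-hand side uses the positive-definiteness hypothesis. Dividing by $\lVert\mathbf{v}\rVert^{2}>0$ yields $\lambda>0$, and since the eigenvalue was arbitrary, every eigenvalue of $M$ is positive.

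The hard part is really no obstacle at all: the entire argument leans on the spectral theorem, a standard result I would cite rather than reprove. The only subtlety worth flagging up front is that the eigenvalues of a Hermitian matrix are automatically real, so the inequality ``$\lambda>0$'' is meaningful to begin with; this follows in one line from $\lambda\lVert\mathbf{v}\rVert^{2}=\mathbf{v}^{*}M\mathbf{v}=\overline{\mathbf{v}^{*}M\mathbf{v}}=\overline{\lambda}\lVert\mathbf{v}\rVert^{2}$, forcing $\lambda=\overline{\lambda}$. With that preliminary in place, both implications collapse to short calculations in the eigenbasis, and the lemma is established.
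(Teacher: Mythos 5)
Your proof is correct. Note, however, that the paper itself gives no proof of this lemma at all: it is stated as a standard linear-algebra fact (used later, without argument, in the proof of Theorem~\ref{thm:pos_def_post} to split the spectrum of the block covariance matrix), so there is no in-paper argument to compare against. Your spectral-theorem route is the canonical textbook proof and is complete: the forward direction via the substitution $\mathbf{y}=U^{*}\mathbf{x}$ and the converse via testing the quadratic form on an eigenvector are both sound, and you correctly flag that realness of the eigenvalues must be established first so that ``$\lambda>0$'' is meaningful. One small point of alignment with the paper is worth making explicit: Proposition~\ref{prop:pos_def_char} defines positive-definiteness using only \emph{real} vectors $\mathbf{x}\in\mathbb{R}^{n}$, whereas your converse evaluates $\mathbf{v}^{*}M\mathbf{v}$ on a possibly complex eigenvector, which implicitly uses the complex-vector definition appropriate for general Hermitian $M$. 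For the real symmetric case actually needed in the paper this is harmless, since the eigenvectors of a real symmetric matrix can be chosen real, so your eigenvector test lands inside the paper's definition; but if you wanted the lemma in full Hermitian generality alongside Proposition~\ref{prop:pos_def_char}, you would either state the complex form $\mathbf{x}^{*}M\mathbf{x}>0$ as the operative definition or add the one-line reduction from complex to real test vectors.
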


\begin{corollary}
\label{cor:kernel_pos_def}A kernel function is \textbf{positive-definite}
if and only if, for any finite set of input points, the corresponding
covariance matrix is \textbf{positive-definite }(directly deduced
from Proposition \ref{prop:pos_def_kernel} and Proposition \ref{prop:pos_def_char}).
\end{corollary}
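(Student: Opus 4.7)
The plan is to establish the biconditional by unfolding both definitions on any finite input set and observing that they coincide through the algebraic identity $\mathbf{c}^\top M \mathbf{c} = \sum_{i,j} K(x_i, x_j)\, c_i c_j$, where $M$ is the Gram/covariance matrix with entries $M_{ij} := K(x_i, x_j)$. Since the kernel is symmetric by assumption, $M$ is a symmetric real matrix, so both Proposition \ref{prop:pos_def_kernel} and Proposition \ref{prop:pos_def_char} are applicable.

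For the forward direction, I would fix an arbitrary finite set $x_1, \ldots, x_n \in [a,b]$ and form $M$ as above. For any nonzero $\mathbf{c} \in \mathbb{R}^n$, I would expand $\mathbf{c}^\top M \mathbf{c}$ as the double sum in Proposition \ref{prop:pos_def_kernel} and invoke the assumed positive-definiteness of $K$ to conclude $\mathbf{c}^\top M \mathbf{c} > 0$. By Proposition \ref{prop:pos_def_char}, this characterizes $M$ as positive-definite. For the converse, I would start from an arbitrary finite sequence $x_1, \ldots, x_n$ and real coefficients $c_1, \ldots, c_n$; if $\mathbf{c} = \mathbf{0}$ the double sum vanishes (trivially satisfying the inequality in Proposition \ref{prop:pos_def_kernel}), and otherwise positive-definiteness of $M$ gives $\sum_{i,j} K(x_i, x_j)\, c_i c_j = \mathbf{c}^\top M \mathbf{c} > 0$. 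Since the finite sequence was arbitrary, the condition in Proposition \ref{prop:pos_def_kernel} is met.

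The main obstacle is a minor mismatch between the non-strict inequality $(\ge 0)$ appearing in Proposition \ref{prop:pos_def_kernel} and the strict inequality $(> 0)$ in Proposition \ref{prop:pos_def_char}. I would resolve this by isolating the degenerate case $\mathbf{c} = \mathbf{0}$, which produces equality on the kernel side and is explicitly excluded on the matrix side, so the two formulations agree on all non-trivial coefficient vectors. No additional machinery (e.g., Mercer's theorem or spectral arguments via Lemma \ref{lem:eigen}) is needed: the corollary follows purely from the algebraic identification of the quadratic form with the kernel double sum.
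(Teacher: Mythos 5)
Your proposal is correct and is precisely the argument the paper intends: the paper supplies no written proof beyond asserting the corollary is ``directly deduced'' from Propositions \ref{prop:pos_def_kernel} and \ref{prop:pos_def_char}, and that deduction is exactly your identification of the kernel double sum with the quadratic form $\mathbf{c}^{T}M\mathbf{c}$ of the Gram matrix, together with isolating $\mathbf{c}=\mathbf{0}$ to reconcile the non-strict inequality in Proposition \ref{prop:pos_def_kernel} with the strict one in Proposition \ref{prop:pos_def_char} (the reading required by the paper's clause that equality holds when $c_{i}=0$ for all $i$). One small caveat in your converse: for a \emph{sequence} with repeated points the Gram matrix is singular, so before invoking positive-definiteness of $M$ you should collapse duplicates by summing their coefficients and work with the Gram matrix of the underlying \emph{set} of distinct points --- the collapsed coefficient vector may then vanish, which is consistent since the kernel-side condition only demands $\ge 0$ in that case.
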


\begin{theorem}
\label{thm:pos_def_post}Consider a Gaussian process, denoted as $f\sim\mathcal{GP}(\mu,k)$,
where $k$ is a positive-definite kernel function. When conditioning
its prior on training inputs $\mathbf{X}$ to obtain a posterior Gaussian
process, any posterior predictive distribution made on test inputs
$\mathbf{X}_{*}$ follows a Gaussian distribution $\mathcal{N}\sim(\mathbf{m}_{*},\mathbf{K}_{*})$,
where $\mathbf{K}_{*}$ is a positive-definite matrix.
\end{theorem}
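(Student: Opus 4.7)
The plan is to derive positive-definiteness of $\mathbf{K}_{*}$ by writing the posterior predictive covariance in its standard Schur-complement form and then invoking the fact that the Schur complement of a positive-definite block in a positive-definite block matrix is itself positive-definite. Concretely, using the conditioning formulas stated in Section~\ref{sec:Related-Work}, the posterior predictive covariance on test inputs $\mathbf{X}_{*}$ is
\begin{equation}
\mathbf{K}_{*} = k(\mathbf{X}_{*},\mathbf{X}_{*}) - k(\mathbf{X}_{*},\mathbf{X})\bigl[k(\mathbf{X},\mathbf{X})+\sigma^{2}\mathbf{I}\bigr]^{-1} k(\mathbf{X},\mathbf{X}_{*}),
\end{equation}
which is exactly the Schur complement (with respect to the top-left block) of the joint covariance matrix of training and test locations.

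First, I would observe that, because $k$ is a positive-definite kernel, Corollary~\ref{cor:kernel_pos_def} implies that the Gram matrix evaluated on the union $\mathbf{X} \cup \mathbf{X}_{*}$ is positive-definite. In particular, its two diagonal blocks $k(\mathbf{X},\mathbf{X})$ and $k(\mathbf{X}_{*},\mathbf{X}_{*})$ are positive-definite (and hence invertible), and the full block matrix
\begin{equation}
M = \begin{pmatrix} k(\mathbf{X},\mathbf{X}) + \sigma^{2}\mathbf{I} & k(\mathbf{X},\mathbf{X}_{*}) \\ k(\mathbf{X}_{*},\mathbf{X}) & k(\mathbf{X}_{*},\mathbf{X}_{*}) \end{pmatrix}
\end{equation}
is positive-definite as well, since adding $\sigma^{2}\mathbf{I} \succeq 0$ to the top-left block preserves positive-definiteness (this uses Lemma~\ref{lem:pos_def_mat_linear}, since it is a positive linear combination of positive-(semi)definite matrices of conformable block shape).

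Second, I would invoke the standard linear-algebra fact that if $M = \begin{pmatrix} A & B \\ B^{\top} & C \end{pmatrix}$ is symmetric positive-definite with $A$ invertible, then the Schur complement $M/A := C - B^{\top} A^{-1} B$ is positive-definite. Applying this to our $M$ with $A = k(\mathbf{X},\mathbf{X})+\sigma^{2}\mathbf{I}$, $B = k(\mathbf{X},\mathbf{X}_{*})$, and $C = k(\mathbf{X}_{*},\mathbf{X}_{*})$ yields exactly $\mathbf{K}_{*} = M/A$, and therefore $\mathbf{K}_{*}$ is positive-definite, completing the argument. Equivalently, one can give a direct certificate: for any nonzero $v$, the block LDL factorization $M = L \,\mathrm{diag}(A, M/A)\, L^{\top}$ shows $v^{\top} (M/A) v = w^{\top} M w$ for the specific lifted vector $w = (-A^{-1} B v,\, v)^{\top}$, and the right-hand side is strictly positive by positive-definiteness of $M$.

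The main obstacle is simply the Schur-complement characterization itself; it is a classical result but needs to be stated cleanly (either quoted with a citation, as the paper does elsewhere for standard linear-algebra facts, or proved in one line via the block LDL factorization as sketched above). A minor subtlety is to remark that the noise-free case ($\sigma = 0$) still works because $k(\mathbf{X},\mathbf{X})$ is already positive-definite by the positive-definiteness of $k$, so the same Schur-complement reasoning applies without modification.
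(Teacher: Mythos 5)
Your proof is correct and follows essentially the same route as the paper's: both write the predictive covariance as the Schur complement of the training block in the joint Gram matrix, deduce positive-definiteness of that joint matrix from Corollary~\ref{cor:kernel_pos_def}, and pass positive-definiteness through the block factorization $M=\mathbf{D}\mathbf{P}\mathbf{D}^{\top}$. Where you differ is in how the final step is justified, and your version is in fact the tighter one. The paper claims that the eigenvalues of $M$ ``separate'' into the eigenvalues of the Schur complement and of $K(\mathbf{X},\mathbf{X})$; that is not literally true, since $M$ and the block-diagonal factor $\mathbf{P}$ are congruent, not similar, so only the inertia (the signs of the eigenvalues, by Sylvester's law) is preserved --- the conclusion survives, but the stated spectral reason does not. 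Your lifted-vector certificate
\begin{equation}
v^{\top}(M/A)v = w^{\top}Mw, \qquad w = \begin{pmatrix} -A^{-1}Bv \\ v \end{pmatrix},
\end{equation}
proves exactly the congruence statement needed, in one line, with no spectral claim. Two further points in your favor: you treat the noisy case $A = k(\mathbf{X},\mathbf{X})+\sigma^{2}\mathbf{I}$ directly, whereas the paper works noise-free and defers the noisy case with ``infer similarly''; and your closing remark that $\sigma=0$ still works because $k(\mathbf{X},\mathbf{X})$ is itself positive-definite ties that off. One pedantic quibble: Lemma~\ref{lem:pos_def_mat_linear} as stated covers positive combinations of positive-\emph{definite} matrices, while adding $\sigma^{2}\mathbf{I}$ only to the top-left block amounts to adding the merely positive-\emph{semidefinite} matrix $\mathrm{diag}(\sigma^{2}\mathbf{I},0)$ to the Gram matrix; the fact you actually need --- positive-definite plus positive-semidefinite is positive-definite --- is immediate from the quadratic-form definition, but it is not quite the cited lemma, so state it directly rather than invoking Lemma~\ref{lem:pos_def_mat_linear}.
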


\begin{proof}
W.l.o.g, assume that $\mu$ is a zero-mean function, which implies
that $f\sim\mathcal{GP}(0,k)$. Consider the case where observations
at training inputs $\mathbf{X}$ are \textit{noise-free} (infer similarly
for the \emph{noise} case), the joint distribution of the training
outputs $f$ and the test outputs $f_{*}$ according to the prior
can be expressed as follows: 
\begin{equation}
\begin{bmatrix}f_{*}\\
f
\end{bmatrix}\sim\mathcal{N}\left(\mathbf{0},\begin{bmatrix}K(\mathbf{X_{*}},\mathbf{X_{*}}) & K(\mathbf{X_{*}},\mathbf{X})\\
K(\mathbf{X},\mathbf{X_{*}}) & K(\mathbf{X},\mathbf{X})
\end{bmatrix}\right)
\end{equation}
Let there be $n$ training points and $n_{*}$ test points. The $n\times n_{*}$
matrix $K(\mathbf{X},\mathbf{X_{*}})$ represents the covariances
evaluated at all pairs of training and test points. This is similar
for $K(\mathbf{X},\mathbf{X}),K(\mathbf{X_{*}},\mathbf{X})$ and $K(\mathbf{X_{*}},\mathbf{X_{*}})$.
By conditioning the joint Gaussian prior distribution on the observations
$\mathbf{X}$, we obtain the posterior predictive distribution $f_{*}|\mathbf{X}_{*},\mathbf{X},f\sim\mathcal{N}(K(\mathbf{X_{*}},\mathbf{X})K(\mathbf{X},\mathbf{X})^{-1}f,K(\mathbf{X_{*}},\mathbf{X_{*}})-K(\mathbf{X_{*}},\mathbf{X})K(\mathbf{X},\mathbf{X})^{-1}K(\mathbf{X},\mathbf{X_{*}}))$.
Under the assumption that our GP employs a positive-definite kernel
function, it can be deduced that the covariance matrix of the prior
joint distribution, $\mathbf{M}:=\begin{bmatrix}K(\mathbf{X_{*}},\mathbf{X_{*}}) & K(\mathbf{X_{*}},\mathbf{X})\\
K(\mathbf{X},\mathbf{X_{*}}) & K(\mathbf{X},\mathbf{X})
\end{bmatrix}$, is positive-definite (as stated in Corollary \ref{cor:kernel_pos_def}).
Moreover, by using the \textit{Schur complement}, this matrix can
be decomposed into $\mathbf{M}=\mathbf{D}\mathbf{P}\mathbf{D}^{T}$where
$\mathbf{D}:=\begin{bmatrix}I & K(\mathbf{X}_{*},\mathbf{X})K(\mathbf{X},\mathbf{X})^{-1}\\
0 & I
\end{bmatrix}$ and $\mathbf{P}:=\begin{bmatrix}K(\mathbf{X_{*}},\mathbf{X_{*}})-K(\mathbf{X_{*}},\mathbf{X})K(\mathbf{X},\mathbf{X})^{-1}K(\mathbf{X},\mathbf{X_{*}}) & 0\\
0 & K(\mathbf{X},\mathbf{X})
\end{bmatrix}$.

We can separate the set of eigenvalues of matrix $\mathbf{M}$ into
two subsets, namely the set of eigenvalues of $K(\mathbf{X_{*}},\mathbf{X_{*}})-K(\mathbf{X_{*}},\mathbf{X})K(\mathbf{X},\mathbf{X})^{-1}K(\mathbf{X},\mathbf{X_{*}})$
and $K(\mathbf{X},\mathbf{X})$. Since all eigenvalues of $\mathbf{M}$
are positive (Lemma \ref{lem:eigen}), all eigenvalues of $K(\mathbf{X_{*}},\mathbf{X_{*}})-K(\mathbf{X_{*}},\mathbf{X})K(\mathbf{X},\mathbf{X})^{-1}K(\mathbf{X},\mathbf{X_{*}})$
are also positive. It is also evident that $K(\mathbf{X_{*}},\mathbf{X_{*}})-K(\mathbf{X_{*}},\mathbf{X})K(\mathbf{X},\mathbf{X})^{-1}K(\mathbf{X},\mathbf{X_{*}})$
as each term involved is a symmetric matrix. Therefore, it is positive-definite.
\end{proof}

\begin{corollary}
\label{cor:pos_non_deg}Given a prior GP with a positive-definite
kernel function, it follows that the posterior GP, conditioned on
the training inputs $\mathcal{D}$, is non-degenerate (deduced from
the results of Theorem \ref{thm:pos_def_post} and \ref{them:gp_gd_them}). 
\end{corollary}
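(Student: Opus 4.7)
The plan is to chain Theorem~\ref{them:gp_gd_them} with Theorem~\ref{thm:pos_def_post}: the former characterises non-degeneracy of a GP via its finite-dimensional marginals, and the latter already delivers positive-definiteness of the posterior predictive covariance on any finite set of test inputs. So the corollary should follow with essentially no new computation; the work is to align the objects exactly.

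First I would fix an arbitrary finite subset $F \subset T$ of the index set (for us, this is a finite collection of test inputs $\mathbf{X}_{*}$). By the standard construction of the posterior GP obtained by conditioning the prior on the training observations $\mathcal{D}$, the finite-dimensional marginal of the posterior GP on $F$ coincides with the posterior predictive distribution $f_{*} \mid \mathbf{X}_{*}, \mathbf{X}, f$. Next, since the prior kernel is assumed positive-definite, Theorem~\ref{thm:pos_def_post} gives directly that this posterior predictive is a Gaussian $\mathcal{N}(\mathbf{m}_{*}, \mathbf{K}_{*})$ whose covariance matrix $\mathbf{K}_{*}$ is positive-definite. Invoking Proposition~\ref{prop:pos_def_gd}, positive-definiteness of $\mathbf{K}_{*}$ is equivalent to the marginal being a non-degenerate multivariate Gaussian.

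Finally, since $F$ (equivalently $\mathbf{X}_{*}$) was arbitrary, every finite-dimensional marginal of the posterior GP is a non-degenerate Gaussian distribution, and Theorem~\ref{them:gp_gd_them} then lets us conclude non-degeneracy of the posterior GP itself. The only real subtlety in the chain is bookkeeping: one must ensure that the posterior predictive treated in Theorem~\ref{thm:pos_def_post} is genuinely the finite-dimensional marginal in the sense used by Theorem~\ref{them:gp_gd_them}, so that the two results compose cleanly. All heavy lifting (the Schur-complement decomposition and the eigenvalue argument that forces $\mathbf{K}_{*}$ to be positive-definite) has already been absorbed into Theorem~\ref{thm:pos_def_post}, so no additional linear-algebraic machinery should be required at this stage.
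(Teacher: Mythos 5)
Your proposal is correct and follows essentially the same route as the paper, which states the corollary as a one-line deduction: identify each finite-dimensional marginal of the posterior GP with the posterior predictive of Theorem~\ref{thm:pos_def_post}, use Proposition~\ref{prop:pos_def_gd} to convert positive-definiteness of $\mathbf{K}_{*}$ into non-degeneracy of that marginal, and conclude via Theorem~\ref{them:gp_gd_them}. You have simply made explicit the chain the paper leaves implicit, with no new ideas needed or missing.
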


According to Bochner's theorem, a continuous stationary function $k(x,y)=\tilde{k}(|x-y|)$
is positive-definite if and only if $\tilde{k}$ is the Fourier transform
of a finite positive measure: $\tilde{k}(t)=\int_{\mathbb{R}}e^{-i\omega t}d\mu(\omega)$.
The Matérn kernels can be expressed as the Fourier transforms of the
function $\frac{1}{(1+\omega^{2})^{p}}$; thus, these kernels exhibit
positive-definiteness. From here, we can directly conclude Proposition
\ref{prop:prior_non_degenerate} in Section \ref{subsec:cm-BO} with
Corollary \ref{cor:pos_non_deg}.

\subsection{Meta-Task Effects on Regret Bounds and the Facilitating Role of $\protect\model$}\label{subsec:Meta-Task-Effects-on}

We analyze the derived upper bound on\emph{ }regret during BO procedures,
as stated in Theorem \ref{them:regret}. We then provide a theoretical
explanation of how $\model$ can accelerate the convergence rate of
BO by leveraging meta-tasks that are progressively similar to the
true target function, as well as its robustness in handling heterogeneity
in function structures across meta-tasks. For all notation used in
this section, we recommend that readers refer to Section \ref{subsec:cm-BO}.
\begin{lemma}
\label{lem:true_posterior_gap}\cite{dai2022provably} Let $\delta\in(0,1)$.
Define $\gamma_{\tau}$ as the maximum information gain regarding
the target function $f$ that can be obtained by observing any set
of $\tau$ observations. $\mu_{\tau}(x)$ and $k_{\tau}(x)$ are the
derived posterior mean and covariance function for iteration $\tau$,
respectively. If $\beta_{\tau}=B+\sigma\sqrt{2(\gamma_{\tau-1}+1+log(\frac{4}{\delta}))}$,
with a probability of at least $1-\frac{\delta}{4}$, $|f(x)-\mu_{\tau}(x)|\le\beta_{\tau}\sqrt{k_{\tau}(x)},\forall x\in\mathcal{D},\tau\ge1$.
\end{lemma}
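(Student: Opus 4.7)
The plan is to invoke the standard RKHS-agnostic Gaussian-process confidence-bound result of Chowdhury--Gopalan and Abbasi-Yadkori--P\'al--Szepesv\'ari, which is precisely the statement cited from \cite{dai2022provably}. The implicit modeling assumption is that $f$ belongs to the RKHS $\mathcal{H}_{k_{0}}$ associated with the prior kernel with norm bound $\|f\|_{\mathcal{H}_{k_{0}}}\le B$, and that the observations $y_{i}=f(x_{i})+\epsilon_{i}$ carry zero-mean $\sigma$-sub-Gaussian noise that is a martingale-difference sequence with respect to the natural history filtration. I would then unpack the proof in three steps.

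\textbf{Step 1 (bias/noise decomposition).} I substitute $\mathbf{y}_{\tau-1}=f(\mathbf{X}_{\tau-1})+\boldsymbol{\epsilon}_{\tau-1}$ into the kernel-ridge form of $\mu_{\tau}$ to split
\begin{equation}
f(x)-\mu_{\tau}(x) \;=\; \underbrace{\bigl(f(x)-k_{0}(\mathbf{X}_{\tau-1},x)^{\top}(K_{\tau-1}+\sigma^{2}I)^{-1}f(\mathbf{X}_{\tau-1})\bigr)}_{\text{bias}} \;-\; \underbrace{k_{0}(\mathbf{X}_{\tau-1},x)^{\top}(K_{\tau-1}+\sigma^{2}I)^{-1}\boldsymbol{\epsilon}_{\tau-1}}_{\text{noise}}.
\end{equation}
Using the reproducing property in $\mathcal{H}_{k_{0}}$ together with Cauchy--Schwarz, the bias term is bounded by $\|f\|_{\mathcal{H}_{k_{0}}}\sqrt{k_{\tau}(x)}\le B\sqrt{k_{\tau}(x)}$, which yields the deterministic half of $\beta_{\tau}$.

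\textbf{Step 2 (self-normalized concentration).} The noise term is controlled by the self-normalized martingale tail bound of Abbasi-Yadkori--P\'al--Szepesv\'ari: with probability at least $1-\delta/4$,
\begin{equation}
\bigl\|\boldsymbol{\epsilon}_{\tau-1}\bigr\|_{(K_{\tau-1}+\sigma^{2}I)^{-1}}^{2} \;\le\; 2\sigma^{2}\Bigl(\tfrac{1}{2}\log\det\!\bigl(I+\sigma^{-2}K_{\tau-1}\bigr)+\log(4/\delta)\Bigr).
\end{equation}
Combining with the classical identity $\tfrac{1}{2}\log\det(I+\sigma^{-2}K_{\tau-1})\le\gamma_{\tau-1}+1$ that links the log-determinant to the maximum information gain, and applying Cauchy--Schwarz in the posterior-variance inner product, the noise contribution is bounded by $\sigma\sqrt{2(\gamma_{\tau-1}+1+\log(4/\delta))}\,\sqrt{k_{\tau}(x)}$, which is the stochastic half of $\beta_{\tau}$.

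\textbf{Step 3 (assembly).} Triangle inequality on the decomposition in Step~1 yields $|f(x)-\mu_{\tau}(x)|\le\beta_{\tau}\sqrt{k_{\tau}(x)}$ for every $x$ and every $\tau\ge 1$; uniformity in $\tau$ is already baked into the ``anytime'' form of the self-normalized inequality, so no further union bound is needed. The main obstacle is Step~2: one must verify that $\{\epsilon_{\tau}\}$ is a martingale-difference sequence with respect to a filtration for which each design point $x_{\tau}$ is predictable (standard in sequential BO but easy to mis-state), and carefully track constants through the $\log\det$-to-$\gamma_{\tau-1}$ link so that the final coefficient is exactly $\sqrt{2(\gamma_{\tau-1}+1+\log(4/\delta))}$. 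Once that skeleton is laid down, the remaining manipulations are routine RKHS and linear algebra, so I would direct the reader to \cite{dai2022provably} for the full chain of constants.
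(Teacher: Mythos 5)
Your proposal matches the paper in substance: the paper offers no proof of this lemma at all, importing it verbatim (with $\delta\mapsto\delta/4$) from \cite{dai2022provably}, whose confidence-bound lemma is in turn exactly the Chowdhury--Gopalan RKHS result you reconstruct, and your three-step skeleton (reproducing-property bias bound giving $B\sqrt{k_{\tau}(x)}$, self-normalized martingale bound for the noise, anytime assembly with no extra union bound over $\tau$) is the standard and correct proof of that result. One small slip to fix if you ever write it out in full: the self-normalized quantity in that argument is $\boldsymbol{\epsilon}_{\tau-1}^{\top}K_{\tau-1}(K_{\tau-1}+\sigma^{2}I)^{-1}\boldsymbol{\epsilon}_{\tau-1}$, i.e.\ $\bigl\|\Phi^{\top}\boldsymbol{\epsilon}_{\tau-1}\bigr\|_{(\Phi^{\top}\Phi+\sigma^{2}I)^{-1}}^{2}$ in feature space, paired with $\|\phi(x)\|_{(\Phi^{\top}\Phi+\sigma^{2}I)^{-1}}\le\sigma^{-1}\sqrt{k_{\tau}(x)}$ to produce the posterior-standard-deviation factor --- not $\|\boldsymbol{\epsilon}_{\tau-1}\|_{(K_{\tau-1}+\sigma^{2}I)^{-1}}^{2}$ as in your Step~2 display (note $\|\mathbf{k}_{0}(\mathbf{X}_{\tau-1},x)\|_{(K_{\tau-1}+\sigma^{2}I)^{-1}}^{2}=k_{0}(x)-k_{\tau}(x)$, so your pairing would not yield the $\sqrt{k_{\tau}(x)}$ factor) --- and the ``$+1$'' enters through the $(1+2/\tau)$-inflation of the Gram matrix required to handle sub-Gaussian noise (via $\tau\log(1+2/\tau)\le2$), not through the $\log\det$-to-$\gamma_{\tau-1}$ link, which gives $\frac{1}{2}\log\det(I+\sigma^{-2}K_{\tau-1})\le\gamma_{\tau-1}$ exactly.
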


We already have $\mu_{0}^{(\tau)}:=\sum_{i=1}^{C}w_{\mathcal{C}_{i}}^{(\tau-1)}\mu^{\mathcal{C}_{i}}$,
and $k_{0}^{(\tau)}:=\sum_{i=1}^{C}\left(w_{\mathcal{C}_{i}}^{(\tau-1)}\right)^{2}k^{\mathcal{C}_{i}}$
being the mean function and covariance function associated with our
meta-prior for iteration $\tau$. Completely analogously, we also
have $\tilde{\mu}_{0}^{(\tau)}$ and $\tilde{k}_{0}^{(\tau)}$ when
$K$ meta-tasks are replaced by $\tilde{\mathcal{D}}^{t_{i}}$. Furthermore,
for cluster representatives $\mathcal{GP}\left(\mu^{\mathcal{C}_{i}},k^{\mathcal{C}_{i}}\right)$
as the \emph{geometric center, }we can expand $\mu_{0}^{(\tau)}(x)=\sum_{i=1}^{K}\zeta_{i}^{\tau}\mu^{t_{i}}(x)$,
$k_{0}^{(\tau)}(x)=\sum_{i=1}^{K}\zeta^{'}{}_{i}^{\tau}k^{t_{i}}(x)$;
$\tilde{\mu}_{0}^{(\tau)}(x)=\sum_{i=1}^{K}\zeta_{i}^{\tau}\tilde{\mu}^{t_{i}}(x)$,
$\tilde{k}_{0}^{(\tau)}(x)=\sum_{i=1}^{K}\zeta^{'}{}_{i}^{\tau}\tilde{k}^{t_{i}}(x)$
where $\zeta_{i}^{\tau}:=\frac{w_{\mathcal{C}_{i}}^{(\tau-1)}}{|C_{i}|}$,
$\zeta^{'}{}_{i}^{\tau}:=\frac{\left(w_{\mathcal{C}_{i}}^{(\tau-1)}\right)^{2}}{|C_{i}|}$
represent the weights of each meta-task derived posterior model, which
are linked to the weights of the cluster that it belongs to. Because
$\sum_{i=1}^{C}w_{\mathcal{C}_{i}}^{(\tau-1)}=1$, we easily have
$\sum_{i=1}^{K}\zeta_{i}^{\tau}=1$ and $\sum_{i=1}^{K}\zeta^{'}{}_{i}^{\tau}<1$.
\begin{lemma}
\label{lem:mean_meta_true_gap}With a probability of at least $1-\frac{\delta}{4}$,
$\Biggl|\mu_{0}^{(\tau)}(x)-\tilde{\mu}_{0}^{(\tau)}(x)\Biggr|\le\alpha_{\tau},\forall x\in\mathcal{D}$ 

where $\alpha_{\tau}:=\sum_{i=1}^{K}\zeta_{i}^{\tau}\frac{n_{i}}{\sigma^{2}}\left(D_{i}+2\sqrt{2\sigma^{2}log\frac{8n_{i}}{\delta}}\right)$.
\end{lemma}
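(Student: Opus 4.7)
The plan is to reduce this lemma to a per-meta-task comparison and then aggregate via the convex combination that defines the meta-prior. Using the expansions stated just before the lemma, we have $\mu_0^{(\tau)}(x) - \tilde{\mu}_0^{(\tau)}(x) = \sum_{i=1}^{K}\zeta_i^\tau \bigl(\mu^{t_i}(x) - \tilde{\mu}^{t_i}(x)\bigr)$, so the triangle inequality together with $\sum_i \zeta_i^\tau = 1$ reduces the lemma to establishing, with probability at least $1-\delta/4$, a per-task bound $|\mu^{t_i}(x) - \tilde{\mu}^{t_i}(x)| \le \tfrac{n_i}{\sigma^2}\bigl(D_i + 2\sqrt{2\sigma^2 \log(8n_i/\delta)}\bigr)$ uniformly in $i$ and $x$.

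For each $i$, the posteriors $\mathcal{GP}^{t_i}$ and $\tilde{\mathcal{GP}}^{t_i}$ share the same input grid $\mathbf{X}^{t_i}$ and the same (WLOG zero-mean) prior; they differ only in outputs, with $\mathbf{y}^{t_i} = f_i(\mathbf{X}^{t_i}) + \boldsymbol{\epsilon}_i$ versus $\tilde{\mathbf{y}}^{t_i} = f(\mathbf{X}^{t_i})$. Plugging both into the closed-form posterior mean and subtracting yields
\[
\mu^{t_i}(x) - \tilde{\mu}^{t_i}(x) = k_0(\mathbf{X}^{t_i}, x)^\top\bigl(k_0(\mathbf{X}^{t_i}) + \sigma^2 I\bigr)^{-1}\bigl(\mathbf{y}^{t_i} - \tilde{\mathbf{y}}^{t_i}\bigr).
\]
Next I would bound the three factors separately: the regularized Gram inverse has spectral norm at most $1/\sigma^2$ by PSD-ness of $k_0(\mathbf{X}^{t_i})$; the cross-covariance vector satisfies $\|k_0(\mathbf{X}^{t_i},x)\|_2 \le \sqrt{n_i}$ for a bounded (normalized) kernel; and the residual vector has entries $(f_i(x_{i,j})-f(x_{i,j})) + \epsilon_{i,j}$ whose first part is bounded by $D_i$ by the definition of the function gap. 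Combining these with Cauchy--Schwarz and the inequality $\|v\|_2 \le \sqrt{n_i}\,\|v\|_\infty$ yields $|\mu^{t_i}(x) - \tilde{\mu}^{t_i}(x)| \le \tfrac{n_i}{\sigma^2}\bigl(D_i + \max_j |\epsilon_{i,j}|\bigr)$.

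The remaining noise quantity is controlled by a two-sided Gaussian tail bound $P(|\epsilon_{i,j}| > t) \le 2e^{-t^2/(2\sigma^2)}$. Choosing $t = \sqrt{2\sigma^2 \log(8n_i/\delta)}$ drives each coordinate's failure probability to $\delta/(4n_i)$, so a union bound across $j=1,\ldots,n_i$ gives $\max_j |\epsilon_{i,j}| \le \sqrt{2\sigma^2 \log(8n_i/\delta)}$ with probability at least $1-\delta/4$ for each task $i$. The additional factor of two appearing in the stated $\alpha_\tau$ absorbs a residual union-bound slack across the $K$ meta-tasks (and the two-sided handling of $|\epsilon_{i,j}|$), so the per-task envelope holds simultaneously in $i$. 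Substituting back into the weighted sum from the first paragraph then produces exactly $\alpha_\tau$, as required; uniformity in $x \in \mathcal{D}$ is automatic because all the preceding bounds are deterministic in $x$ once a noise realization is fixed.

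The main obstacle will be the careful bookkeeping of the probabilistic envelope, specifically arranging the concentration event for the Gaussian noises so that the cumulative failure probability across all $K$ meta-tasks and all $\sum_i n_i$ noise coordinates stays below $\delta/4$ while still producing the logarithmic term $\log(8n_i/\delta)$ (rather than a looser $\log(8 K n_i/\delta)$). Everything else---the linearity decomposition, the posterior-mean algebra, and the deterministic operator-norm and kernel-norm bounds---is mechanical once this concentration scaffold is in place.
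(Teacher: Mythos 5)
Your skeleton matches the paper's intended argument: the paper's proof is a pointer to Lemma 3 of \cite{dai2022provably}, and it uses exactly your three ingredients --- the linearity expansion $\mu_{0}^{(\tau)}(x)-\tilde{\mu}_{0}^{(\tau)}(x)=\sum_{i=1}^{K}\zeta_{i}^{\tau}\bigl(\mu^{t_{i}}(x)-\tilde{\mu}^{t_{i}}(x)\bigr)$ with $\sum_{i}\zeta_{i}^{\tau}=1$, the cancellation of prior-mean and covariance terms (the paper's note (3)) leaving $k_{0}(\mathbf{X}^{t_{i}},x)^{\top}(k_{0}(\mathbf{X}^{t_{i}})+\sigma^{2}I)^{-1}(\mathbf{y}^{t_{i}}-\tilde{\mathbf{y}}^{t_{i}})$, and the Cauchy--Schwarz bound via $\|(\mathbf{\Sigma}+\sigma^{2}I)^{-1}\|_{2}\le\nicefrac{1}{\sigma^{2}}$ and $\|k_{0}(\mathbf{X}^{t_{i}},x)\|_{2}\le\sqrt{n_{i}}$ under $k\le1$. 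However, there is a genuine gap in your treatment of $\tilde{\mathbf{y}}^{t_{i}}$ and, consequently, in your accounting for the factor $2$. You take $\tilde{\mathbf{y}}^{t_{i}}=f(\mathbf{X}^{t_{i}})$ to be \emph{noiseless}, but the paper's note (2) bounds \emph{both} $\|\mathbf{y}^{t_{i}}-\mathbf{f}^{t_{i}}\|_{2}\le C$ and $\|\tilde{\mathbf{y}}^{t_{i}}-\tilde{\mathbf{f}}^{t_{i}}\|_{2}\le C$ with $C=\sqrt{n_{i}}\sqrt{2\sigma^{2}\log\nicefrac{8n_{i}}{\delta}}$ --- a bound that is vacuous unless the hypothetical observations of $f$ are themselves noisy, as they are in \cite{dai2022provably}. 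The intended residual decomposition is $\mathbf{y}^{t_{i}}-\tilde{\mathbf{y}}^{t_{i}}=(\mathbf{y}^{t_{i}}-\mathbf{f}^{t_{i}})+(\mathbf{f}^{t_{i}}-\tilde{\mathbf{f}}^{t_{i}})+(\tilde{\mathbf{f}}^{t_{i}}-\tilde{\mathbf{y}}^{t_{i}})$, giving $\sqrt{n_{i}}D_{i}+2C$; the factor $2$ is \emph{structural}, coming from two independent noise vectors, not from probabilistic slack.

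This matters because your proposed repair --- ``the factor of two absorbs a residual union-bound slack across the $K$ meta-tasks'' --- is asserted rather than proved, and it is not the mechanism the lemma relies on. As your own last paragraph concedes, your per-task envelope at level $1-\nicefrac{\delta}{4}$ unions to $1-\nicefrac{K\delta}{4}$ over tasks, which does not match the statement; doubling the threshold does shrink each Gaussian tail (the exponent quadruples), but turning that into a clean uniform-in-$i$ statement would require a $K$-dependent condition that appears nowhere in the lemma. The paper sidesteps this entirely by inheriting the concentration event and its probability bookkeeping wholesale from Lemma 6 of \cite{dai2022provably}, applied once to each of the two noise vectors per task. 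So your fair worry about $\log(\nicefrac{8Kn_{i}}{\delta})$ versus $\log(\nicefrac{8n_{i}}{\delta})$ is really a question about the cited lemma's allocation of $\delta$, not something the factor $2$ resolves; with the noisy-$\tilde{\mathbf{y}}^{t_{i}}$ model and the three-way triangle split above, the rest of your argument goes through verbatim and yields $\alpha_{\tau}$ exactly.
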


\begin{proof}
This lemma can be proceeded by expanding $\mu_{0}^{(\tau)}(x)$, $\tilde{\mu}_{0}^{(\tau)}(x)$
and then transforming it in the same way as proof of Lemma 3 in \cite{dai2022provably},
noting that: 
\end{proof}

\begin{enumerate}
\item[(1)] $\left\Vert (\mathbf{\Sigma}+\sigma^{2}I)^{-1}\right\Vert _{2}\le\frac{1}{\sigma^{2}}$
where $\mathbf{\Sigma}$ is a positive semi-definite (Lemma 5 \cite{dai2022provably}), 
\item[(2)] $\|\mathbf{y}^{t_{i}}-\mathbf{f}^{t_{i}}\|_{2}\le C$, $\|\tilde{\mathbf{y}}^{t_{i}}-\tilde{\mathbf{f}}^{t_{i}}\|_{2}\le C$
where $C:=\sqrt{n_{i}}\sqrt{2\sigma^{2}log\frac{8n_{i}}{\delta}}$
with a probability at least $1-\frac{\delta}{4}$ (Lemma 6 \cite{dai2022provably}), 
\item[(3)] $k_{0}(x)=\tilde{k}_{0}(x)$ for all $x\in\mathcal{D}$, as the posterior
covariance depends solely on the input locations, independent of the
output responses,
\item[(4)] the assumption w.l.o.g. that $k(x,x')\le1$ for all $x,x'\in\mathcal{D}$.
\end{enumerate}
We now proceed to prove our Theorem \ref{them:regret}. 

\textbf{\emph{Proof for Theorem}} \ref{them:regret}. Denote $a^{(\tau)}(x)=\mu^{(\tau)}(x)+\xi\sqrt{k^{(\tau)}(x)}\,\,(\xi>0)$
as the GP-UCB constructed from the corresponding posterior mean and
covariance. Let $x^{*}$ be a global maximizer of the target function
$f$, and $x_{\tau}$ the $\tau$-th queried observation. We can expand
the\emph{ instantaneous regret }as follows 
\begin{align}
 & r_{\tau}=f(x^{*})-f(x_{\tau})\nonumber \\
 & \le\mu^{(\tau)}(x^{*})+\beta_{\tau}\sqrt{k^{(\tau)}(x^{*})}-\mu^{(\tau)}(x_{\tau})+\beta_{\tau}\sqrt{k^{(\tau)}(x_{\tau})}\nonumber \\
 & =\mu^{(\tau)}(x^{*})+\frac{\beta_{\tau}}{\xi}\left(a^{(\tau)}(x^{*})-\mu^{(\tau)}(x^{*})\right)-\mu^{(\tau)}(x_{\tau})+\beta_{\tau}\sqrt{k^{(\tau)}(x_{\tau})}\nonumber \\
 & \le\mu^{(\tau)}(x^{*})+\frac{\beta_{\tau}}{\xi}\left(a^{(\tau)}(x_{\tau})-\mu^{(\tau)}(x^{*})\right)-\mu^{(\tau)}(x_{\tau})+\beta_{\tau}\sqrt{k^{(\tau)}(x_{\tau})}\nonumber \\
 & =\left(1-\frac{\beta_{\tau}}{\xi}\right)\left(\mu^{(\tau)}(x^{*})-\mu^{(\tau)}(x_{\tau})\right)+(\xi+\beta_{\tau})\sqrt{k^{(\tau)}(x_{\tau})}\label{eq:regret_1}
\end{align}
The first inequality follows Lemma \ref{lem:true_posterior_gap} for
all $x\in\mathcal{D}$; the second one is because $x_{\tau}$ is selected
to to maximize the acquisition function. 

Additionally, we have 
\[
\mu^{(\tau)}(x^{*})-\mu^{(\tau)}(x_{\tau})=\left(\mu_{0}^{(\tau)}(x^{*})-\mu_{0}^{(\tau)}(x_{\tau})\right)+\left(\mathbf{k}_{0}^{(\tau)}(x^{*})-\mathbf{k}_{0}^{(\tau)}(x_{\tau})\right)^{T}(\mathbf{\Sigma}_{0}^{(\tau)}+\sigma^{2}\mathbf{I})^{-1}(\mathbf{y_{\tau}}-\mu_{0}^{(\tau)}(\mathbf{X_{\tau}}))
\]
in which the first term can be transformed
\begin{align}
\mu_{0}^{(\tau)}(x^{*})-\mu_{0}^{(\tau)}(x_{\tau}) & =\left(\mu_{0}^{(\tau)}(x^{*})-\tilde{\mu}_{0}^{(\tau)}(x^{*})\right)+\left(\tilde{\mu}_{0}^{(\tau)}(x_{\tau})-\mu_{0}^{(\tau)}(x_{\tau})\right)\nonumber \\
 & +\left(\tilde{\mu}_{0}^{(\tau)}(x^{*})-f(x^{*})\right)+\left(f(x_{\tau})-\tilde{\mu}_{0}^{(\tau)}(x_{\tau})\right)+r_{\tau}\nonumber \\
 & \le2\alpha+\left(\tilde{\mu}_{0}^{(\tau)}(x^{*})-f(x^{*})\right)\nonumber \\
 & +\left(f(x_{\tau})-\tilde{\mu}_{0}^{(\tau)}(x_{\tau})\right)+r_{\tau}\\
 & =2\alpha+\eta^{\tau}(x^{*})-\eta^{\tau}(x_{\tau})+r_{\tau}\label{eq:regret_3}
\end{align}
where $\eta^{\tau}(x):=\tilde{\mu}_{0}^{(\tau)}(x)-f(x)$, as described
in Theorem \ref{them:regret}. The inequality follows Lemma \ref{lem:mean_meta_true_gap}.

Meanwhile, 
\begin{align*}
 & \left|\left(\mathbf{k}_{0}^{(\tau)}(x^{*})-\mathbf{k}_{0}^{(\tau)}(x_{\tau})\right)^{T}(\mathbf{\Sigma}_{0}^{(\tau)}+\sigma^{2}\mathbf{I})^{-1}(\mathbf{y_{\tau}}-\mu_{0}^{(\tau)}(\mathbf{X_{\tau}}))\right|\\
 & \le\|\mathbf{k}_{0}^{(\tau)}(x^{*})-\mathbf{k}_{0}^{(\tau)}(x_{\tau})\|_{2}\|(\mathbf{\Sigma}_{0}^{(\tau)}+\sigma^{2}\mathbf{I})^{-1}\|_{2}\|\mu_{0}^{(\tau)}(\mathbf{X_{\tau}})-\mathbf{y_{\tau}}\|_{2}
\end{align*}
in which 
\begin{align*}
\|\mathbf{k}_{0}^{(\tau)}(x^{*})-\mathbf{k}_{0}^{(\tau)}(x_{\tau})\|_{2} & =\left\Vert \sum_{i=1}^{K}\zeta^{'}{}_{i}^{\tau}\left(\mathbf{k}^{t_{i}}(x^{*})-\mathbf{k}^{t_{i}}(x_{\tau})\right)\right\Vert _{2}\\
 & \le\sum_{i=1}^{K}\zeta^{'}{}_{i}^{\tau}\|\mathbf{k}^{t_{i}}(x^{*})-\mathbf{k}^{t_{i}}(x_{\tau})\|_{2}\le2\sum_{i=1}^{K}\zeta^{'}{}_{i}^{\tau}\sqrt{n_{i}}
\end{align*}
with the inequalities follow the triangle inequality of norm and the
assumption $k(x,x')\le1$ for all $x,x'\in\mathcal{D},$ respectively;
and,
\begin{align}
\|\mu_{0}^{(\tau)}(\mathbf{X_{\tau}})-\mathbf{y_{\tau}}\|_{2} & =\left\Vert \tilde{\mu}_{0}^{(\tau)}(\mathbf{X_{\tau}})-\mathbf{f_{\tau}}+\mu_{0}^{(\tau)}(\mathbf{X_{\tau}})-\tilde{\mu}_{0}^{(\tau)}(\mathbf{X_{\tau}})\right\Vert _{2}\nonumber \\
 & \le\left\Vert \eta^{\tau}(\mathbf{X_{\tau}})\right\Vert _{2}+\left\Vert \mu_{0}^{(\tau)}(\mathbf{X_{\tau}})-\tilde{\mu}_{0}^{(\tau)}(\mathbf{X_{\tau}})\right\Vert _{2}\nonumber \\
 & \le\left\Vert \eta^{\tau}(\mathbf{X_{\tau}})\right\Vert _{2}+\sqrt{\sum_{i=1}^{\tau-1}\left(\mu_{0}^{(\tau)}(x_{i})-\tilde{\mu}_{0}^{(\tau)}(x_{i})\right)^{2}}\nonumber \\
 & \le\left\Vert \eta^{\tau}(\mathbf{X_{\tau}})\right\Vert _{2}+\sqrt{\tau-1}\alpha_{\tau}\label{eq:eq_2}
\end{align}
with the first inequality follows the triangle inequality and the
second one follows Lemma \ref{lem:mean_meta_true_gap}.

Combining all of these above expansions with a note from Lemma 5 \cite{dai2022provably},
we have

\begin{align*}
 & \left|\left(\mathbf{k}_{0}^{(\tau)}(x^{*})-\mathbf{k}_{0}^{(\tau)}(x_{\tau})\right)^{T}(\mathbf{\Sigma}_{0}^{(\tau)}+\sigma^{2}\mathbf{I})^{-1}(\mathbf{y_{\tau}}-\mu_{0}^{(\tau)}(\mathbf{X_{\tau}}))\right|\\
 & \le2\sum_{i=1}^{K}\zeta^{'}{}_{i}^{\tau}\frac{\sqrt{n_{i}}}{\sigma^{2}}\left(\left\Vert \eta^{\tau}(\mathbf{X_{\tau}})\right\Vert _{2}+\sqrt{\tau-1}\alpha_{\tau}\right)
\end{align*}

Finally, we can adopt

$r_{\tau}\le\left(1-\frac{\beta_{\tau}}{\xi}\right)\left(\iota_{\tau}\alpha_{\tau}+\eta^{\tau}(x^{*})-\eta^{\tau}(x_{\tau})+r_{\tau}+\omega_{\tau}\left\Vert \eta^{\tau}(\mathbf{X_{\tau}})\right\Vert _{2}\right)+(\xi+\beta_{\tau})\sqrt{k^{(\tau)}(x_{\tau})}$
in which $\iota_{\tau}:=2+\omega_{\tau}\sqrt{\tau-1}$ and $\omega_{\tau}:=2\sum_{i=1}^{K}\zeta^{'}{}_{i}^{\tau}\frac{\sqrt{n_{i}}}{\sigma^{2}}$. 

This is equivalent to 

\begin{align*}
r_{\tau} & \le\frac{\xi}{\beta_{\tau}}\left(1-\frac{\beta_{\tau}}{\xi}\right)\left(\iota_{\tau}\alpha_{\tau}+\eta^{\tau}(x^{*})-\eta^{\tau}(x_{\tau})+\omega_{\tau}\left\Vert \eta^{\tau}(\mathbf{X_{\tau}})\right\Vert _{2}\right)+\frac{\xi}{\beta_{\tau}}(\xi+\beta_{\tau})\sqrt{k^{(\tau)}(x_{\tau})}\\
 & =\mathbf{A}_{1}+\mathbf{A}_{2}
\end{align*}

From this, we can conclude our Theorem \ref{them:regret}.

\medskip{}

\textbf{\textit{More insights into Remark \ref{rem:regret}}}. The
impact of the meta-task on BO queries can be interpreted through $\mathbf{A}_{1}$
and $\mathbf{A}_{2}$. The term $\mathbf{A}_{2}$ is proportional
to the upper bound on the instantaneous regret for the standard GP-UCB
\cite{srinivas2009gaussian}. Therefore, the meta-tasks affect the
upper bound on the regret through the term $\mathbf{A}_{1}$, which
is of concern when it is positive. Specifically, two related factors
are explored:
\begin{enumerate}
\item[(1)] $\iota_{\tau}\alpha_{\tau}$ where $\alpha_{\tau}:=\sum_{i=1}^{K}\zeta_{i}^{\tau}\frac{n_{i}}{\sigma^{2}}\left(D_{i}+2\sqrt{2\sigma^{2}log\frac{8n_{i}}{\delta}}\right)$,
$\sum_{i=1}^{K}\zeta_{i}^{\tau}=1$. Intuitively, $\alpha_{\tau}$
is larger when the \emph{function gap} between the true functions
of the meta-task and the target task, $D_{i}$, is larger, as well
as the increasing number of meta-points $n_{i}$. Our $\model$ weights
\emph{meta-task clusters} based on $\tau$-iteration's \emph{estimated
$d_{i}^{(\tau)}\approx D_{i}$} using the distance between the function
shape-simulating GDs and the online-shape distribution of the current
task. Accordingly, the weight $\zeta_{i}^{\tau}=\frac{w_{\mathcal{C}_{i}}^{(\tau-1)}}{|C_{i}|}$
reflects $d_{i}^{(\tau)}$ at iteration $\tau$, and allows for adaptive
adjustments to down-weight the contribution of \emph{dissimilar meta-task
clusters}. This reduces $\alpha_{\tau}$, facilitating a tighter bound.
Additionally, meta-tasks within a highly homogeneous cluster will
receive equal weights, which enhances computational efficiency.
\item[(2)] The pattern $\eta^{\tau}(x):=\tilde{\mu}_{0}^{(\tau)}(x)-f(x)$.
From here, we have\\
$\eta^{\tau}(x)=\sum_{i=1}^{K}\zeta_{i}^{\tau}\left(\tilde{\mu}^{t_{i}}(x)-f(x)\right)$,
$\sum_{i=1}^{K}\zeta_{i}^{\tau}=1$. Because $\tilde{\mu}^{t_{i}}(x)$
is the posterior mean conditioning on $n_{i}$ observations of the
target function, it can be inferred that as $n_{i}\rightarrow\infty$,
$\tilde{\mu}^{t_{i}}(x)$ asymptotically approaches the true function
shape of $f(x)$, thus $\eta^{\tau}(x)$ depends mainly on accumulated
observation noises. Our $\model$ weights $\zeta_{i}^{\tau}$ reduce
the influence of \emph{dissimilar meta-tasks} when $\tilde{\mu}^{t_{i}}(x)-f(x)$
is large, helping $\eta^{\tau}(x)$ to approach $0$ faster when observation
noise is not considered and thus facilitating a tighter regret bound.
\end{enumerate}

\section{Appendix 2: On Varying Meta-Task Cluster Cardinality towards cm-BO}\label{sec:Second-appendix}

Fig. \ref{fig:clus_analysis} shows the estimated number of clusters
impacts BO convergence rate. This could be attributed to the homogeneity
of the intra-cluster meta-tasks. However, partitioning meta-tasks
into clusters has a generally positive effect compared to non-clustered
baselines (as 1-C). 
\begin{figure}[H]
\caption{Mean NSR when varying the number of meta-task clusters for the two
$\protect\model$ variants, WssClus\_WssCMP and WssClus\_WssCMP\_Bary
(experimental results from a train-test split seed in \emph{rpart.preproc}
meta-dataset).}\label{fig:clus_analysis}

\includegraphics[width=1\linewidth]{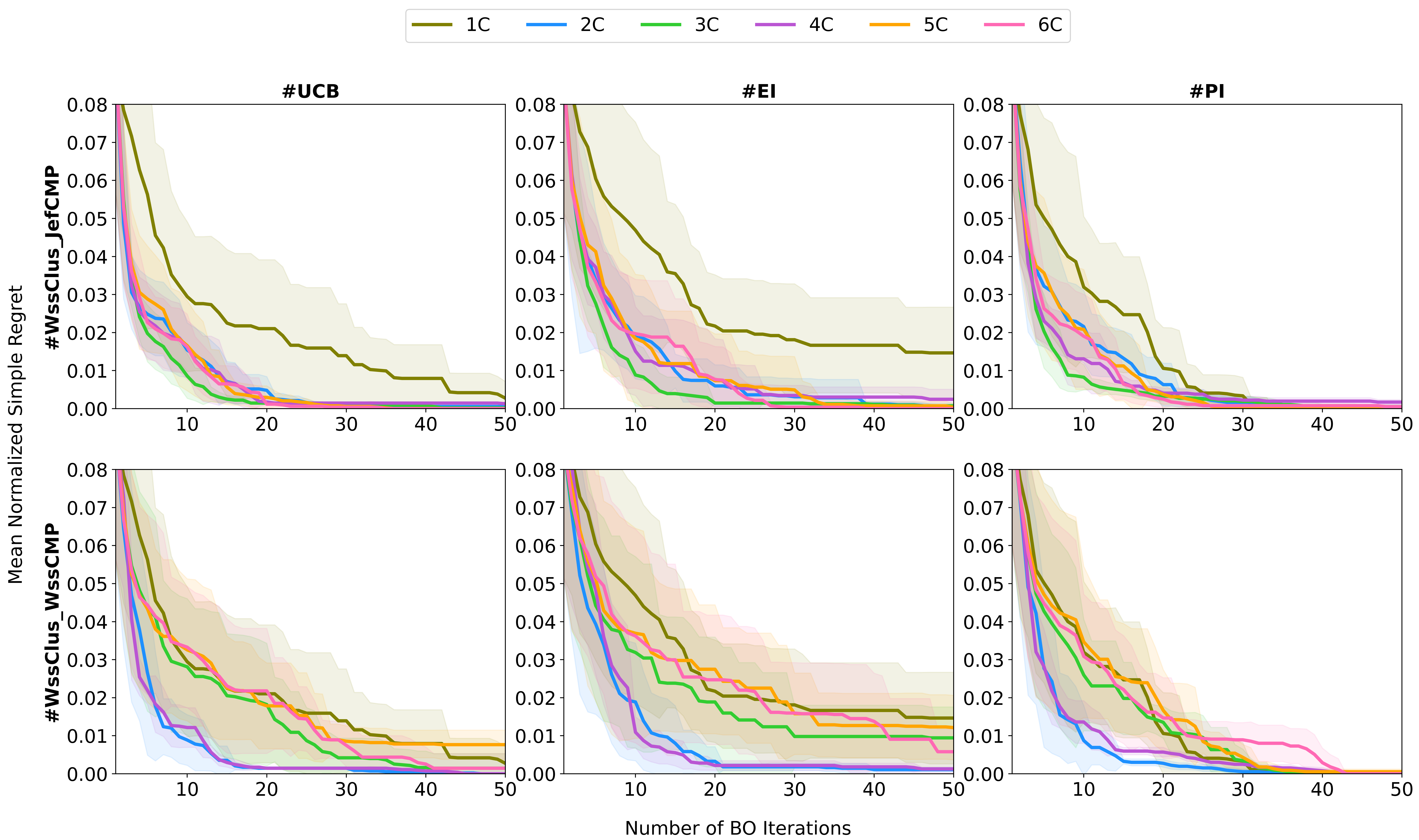}
\end{figure}

\end{document}